\newtheorem{problem*}{Problem}
\newtheorem{theorem}{Theorem}
\newtheorem{lemma}{Lemma}
\newtheorem{definition}{Definition}
\newtheorem{example}{Example}
\newtheorem*{example*}{Example}
\newcommand{\nosemic}{\renewcommand{\@endalgocfline}{\relax}}
\newcommand{\dosemic}{\renewcommand{\@endalgocfline}{\algocf@endline}}
\let\oldnl\nl
\newcommand{\nonl}{\renewcommand{\nl}{\let\nl\oldnl}}
\DeclareMathOperator*{\argmax}{argmax}
\DeclareMathOperator*{\argmin}{argmin}
\newcommand{\cA}{\mathcal{A}}  
 \newcommand{\cD}{\mathcal{D}}
\newcommand{\cI}{\mathcal{I}} \newcommand{\cL}{\mathcal{L}}
\newcommand{\cM}{\mathcal{M}} \newcommand{\cN}{\mathcal{N}}
\newcommand{\cR}{\mathcal{R}}
 \newcommand{\cY}{\mathcal{Y}}
\newcommand{\EE}{\mathbb{E}} \newcommand{\RR}{\mathbb{R}}
\newcommand{\oset}[3][0ex]{%
  \mathrel{\mathop{#3}\limits^{
    \vbox to#1{\kern-2\ex@
    \hbox{$\scriptstyle#2$}\vss}}}}
\newcommand{\optimal}[1]{\oset{\scalebox{.5}{$\star$}}{#1}}
\title{Differentially Private and Fair Deep Learning: \\
A Lagrangian Dual Approach}
\author{%
  Cuong Tran \\
  Syracuse University \\
  \texttt{cutran@syr.edu}
  \And 
  Ferdinando Fioretto \\
  Syracuse University \\
  \texttt{ffiorett@syr.edu} 
  \And
  Pascal Van Hentenryck\\
  Georgia Institute of Technology\\
  pvh@isye.gatech.edu
}
\begin{document}
\maketitle\sloppy\allowdisplaybreaks

\begin{abstract}
A critical concern in data-driven decision making is to build models whose outcomes do not discriminate against some demographic groups, including gender, ethnicity, or age. To ensure non-discrimination in learning tasks, knowledge of the sensitive attributes is essential, while, in practice, these attributes may not be available due to legal and ethical requirements. 
To address this challenge, this paper studies a model that protects the privacy of the individuals’ sensitive information while also allowing it to learn non-discriminatory predictors. 
The method relies on the notion of differential privacy and the use of Lagrangian duality to design neural networks that can accommodate fairness constraints while guaranteeing the privacy of sensitive attributes. 
The paper analyses the tension between accuracy, privacy, and fairness and 
the experimental evaluation illustrates the benefits of the proposed model on several prediction tasks.
\end{abstract}

\section{Introduction}
\label{sec:introduction}

A number of socio-technical decisions, such as criminal assessment, landing, and hiring, are increasingly being aided by machine learning systems. 
A critical concern is that the learned models are prone to report outcomes that are discriminatory against some demographic group, including gender, ethnicity, or age. 
These concerns have spurred the recent development of fairness definitions and algorithms for decision-making, focusing attention on the tradeoff between the model accuracy and fairness. 

To ensure non-discrimination in learning tasks, knowledge of the \emph{sensitive} attributes is essential. At the same time, legal and ethical requirements often prevent the use of this sensitive data. 
For example, U.S.~law prevents using racial identifiers in the development of models for consumer lending or credit scoring. 
Other requirements may be even more stringent, and prevent the collection of protected user attributes, such as for the case of racial attributes in the E.U.~General Data Protection Regulation (GDPR), or require protection of the consumer data privacy. 
In this scenario, an important tension arise between (1) the demand for models to be non-discriminatory, (2) the requirement for such model to use the protected attribute during training, 
and (3) the restriction on the data or protected attributes that can be used. 
There is thus a need to provide learning models that can both guarantee non discriminatory decisions and protect the privacy of the individuals' sensitive attributes.

To this end, this paper introduces a differential privacy framework to train deep learning models that satisfy several group fairness notions, including \emph{equalized odds}, \emph{accuracy parity}, and \emph{demographic parity} \cite{zafar:17, hardt:16, pmlr-v80-agarwal18a}, while providing privacy of the protected attributes. The key elements 
of the framework can be summarized as follows:
\begin{enumerate}
\item The fairness requirement is captured by casting the learning task as a constrained optimization problem. A Lagrangian dual approach is then applied to the learning task, dualizing the fairness constraints using augmented Lagrangian terms \cite{Hestenes:69}.

\item The privacy requirement is enforced by using a \emph{clipping approach} on the primal and dual steps and adding noise calibrated by the sensitivities of the constraint terms and their gradients. The primal step only applies clipping on constraint gradients involving sensitive attributes, 
and thus, does not have a major effect on the model accuracy.

\item The framework addresses the bias-variance trade-off of clipping by providing bounds on the expected errors of constraint gradients and constraint violations. The clipping bounds can then be calibrated by minimizing these upper bounds.

\item Finally, the framework generalizes to the important scenario where only a subset of the individuals reports the sensitive attributes, i.e., when participants are given the choice of releasing sensitive information.
\end{enumerate}

\noindent
The rest of the paper reviews the related work (Section \ref{sec:related_work}), introduces the problem setting (Section \ref{sec:problem}), discusses the fairness and privacy definitions adopted (Section \ref{sect:preliminaries}), presents the proposed \emph{Private and Fair Lagrangian Dual} (PF-LD) framework (Sections \ref{sec:lagrangian_framework}, \ref{sec:PF-LD}, and \ref{sec:missing_values}), its theoretical results (Sections \ref{sec:privacy_analysis} and \ref{sec:bias-variance}), and its empirical evaluation on several prediction tasks (Section \ref{sec:experiment}). The empirical results show that, on selected benchmarks, PF-LD achieves an excellent trade-off among accuracy, privacy, and fairness. It may represent a promising step towards a practical tool for privacy-preserving and fair decision making.

\section{Related Work}
\label{sec:related_work}
The line of works on algorithmic fairness can be categorized into three groups those adopting pre-processing techniques to guarantee fairness~\cite{zhao2019conditional, edwards2015censoring, beutel2017data}, those developing modifying algorithms to satisfy some fairness notion~\cite{10.5555/3327144.3327203, woodworth2017learning, NIPS2017_6670, DBLP:conf/icdm/CaldersKKAZ13, 10.1145/3306618.3314255}, and those using post-processing techniques~\cite{Feldman2015ComputationalFP, NIPS2016_6374, DBLP:journals/corr/abs-1812-06135, 10.1145/3306618.3314287}. The interested reader is referred to \cite{alex2018frontiers} for a recent survey on algorithmic fairness. 
On the differentially private deep learning front, there are two relevant lines of work. The first is based on the seminal work of Abadi et al.~\cite{abadi:16}, which derives a differential privacy version of stochastic gradient descent (DP-SGD) and proposes a technique, called \emph{moment accountant} to track detailed information of the privacy loss incurred by a sequence of SGD steps \cite{abadi:16}. This idea has been extended and improved by a number of follow-up work \cite{mcmahan:18,pichapati:19}. 
The second avoids using the iterative nature of the optimization algorithms used to training deep learning models by exploiting a collection of \emph{teacher} models \cite{papernot:16,berthelot2019mixmatch} to train a privacy-preserving model.

While this literature is extensive, the topics of privacy and fairness have been study mostly in isolation. A few exceptions are represented by the following work.
The work by Dwork et.~al~\cite{dwork:12} is one of the earliest contribution linking fairness and differential privacy and shows that individual fairness is a generalization of differential privacy. 
More recently, Cummings et.~al~\cite{cummings:19} consider the tradeoffs when considering differential privacy and equal opportunity, a notion of fairness that restricts a classifier to produce equal true positive rates across different groups. The work claim that there is no classifier that achieves $(\epsilon,0)$-differential privacy, satisfies equal opportunity, and has accuracy better than a constant classifier. 
Ekstrand et.~al~\cite{ekstrand:18} raise questions about the tradeoffs involved between privacy and fairness and, finally, Jagielski et.~al~\cite{Jagielski:20} shows two simple, yet effective algorithms that satisfy $(\epsilon, \delta)$-differential privacy and equalized odds. 
Finally, a recent line of work has also observed that private models may have a negative impact towards fairness. 
In particular, Pujol et.~al~\cite{pujol:20} shows that differential privacy could disproportionately affect some groups on several Census resource allocation tasks. A similar observation was made by Bagdasaryan et.~al~\cite{bagdasaryan:19} in the context of private deep learning models trained using DP-SDG. The authors observed disparity in performance across different sub-populations on several classification tasks.

It is also worth mentioning that in order to build a fair model it is necessary to collect a subset of users with their sensitive information like their gender, races or ages. This poses a privacy risks on fair models \cite{Jagielski:20, pmlr-v80-kilbertus18a}. To achieve a fair model without disclosing those sensitive  attributes, <ozannar et.~al~\cite{mozannar2020fair} have very recently developed a differential privacy mechanism in which the true sensitive information is perturbed prior being applied to a fair learning model. 

In contrast to the work discussed above, this paper, presents a Lagrangian dual method to enforce several fairness constraints directly into the training cycle of a deep neural network and proposes a differentially private and fair version of the learning algorithm. 

\section{Problem Settings and Goals}
\label{sec:problem}

The paper adopts boldface symbols to describe vectors (lowercase) and matrices (uppercase). Italic symbols are used to denote scalars (lowercase) and random variables or data features (uppercase). Notation $\|\cdot\|$ is used to denote the $l_2$ norm.

The paper considers datasets $D$ consisting of $n$ individual data points $(X_i, A_i, Y_i)$, with $i \!\in\! [n]$ drawn i.i.d.~from an unknown distribution. Therein, $X_i \!\in\! \mathcal{X}$ is a \emph{non-sensitive} feature vector, $A_i \!\in\! \mathcal{A}$, with $\mathcal{A} = [m]$ (for some finite $m$) is a protected attribute, and $Y_i \!\in\! \mathcal{Y} = \{0,1\}$ is a binary label.
The goal is to learn a classifier $\cM_\theta : \mathcal{X} \to \mathcal{Y}$, where $\theta$ is a vector of real-valued parameters, 
that ensures a specified non-discriminatory notion with respect to $A$ while guaranteeing the \emph{privacy} of the sensitive attribute $A$. 
The model quality is measured in terms of a nonnegative, and assumed differentiable, \emph{loss function} $\mathcal{L}: \mathcal{Y} \times \mathcal{Y} \to \mathbb{R}_+$, and the problem is that of minimizing the empirical risk function:
{
\begin{equation}
\label{eq:erm}
    \min_\theta J(\cM_\theta, D) = \frac{1}{n} \sum_{i=1}^n 
    \mathcal{L}(\cM_\theta(X_i), Y_i).
    \tag{L}
\end{equation}
}
\setcounter{equation}{0}
The paper focuses on learning general classifiers, such as neural networks, that satisfy group fairness (as defined next) and protect the disclosure of the sensitive attributes using the notion of differential privacy.
Importantly, the paper assumes that the attribute $A$ is not part of the model input during inference. This is crucial in the application of interest to this work as the protected attributes cannot be disclosed.

\section{Preliminaries} \label{sect:preliminaries}
This section reviews the fairness and privacy notion adopted in this work.

\subsection{Fairness}    \label{sec:fairness}
The paper consider a classifier $\cM$ satisfying some group fairness notion under a distribution over $(X, A, Y)$ for the protected attribute $A$ and focuses on three fairness notions:

\begin{itemize}[leftmargin=*, parsep=2pt, itemsep=0pt, topsep=0pt]
	\item \emph{Demographic Parity}: 
	$\cM$'s predictions are statistically independent of the protected attribute $A$. That is, 
	{	
	\begin{equation*} 
	    \Pr[\cM(X) \!=\! \hat{y} \mid A \!=\! a] \!=\! \Pr[\cM(X) \!=\! \hat{y}]\;\; 
	    \forall a \in \cA, \hat{y} \in \cY,
	\end{equation*}
	}	
	which, since $\hat{y} \in \{0,1\}$, can be expressed as
	{
	\begin{equation*} 
		\EE[\cM(X) \mid A = a] = \EE[\cM(X)], \;\;
		\forall a \in \cA.
	\end{equation*}
	}
	\item \emph{Equalized odds}: $\cM$'s predictions are conditionally independent of the protected attribute $A$ given the label $Y$. 
	That is, for all $a \in \mathcal{A}, \hat{y} \in \mathcal{Y}$, and $y \in \mathcal{Y}$:
	{
	\begin{equation*}
    \Pr[\cM(X) \!=\! \hat{y} \mid A\!=\!a, Y\!=\!y] \!=\! 
    \Pr[\cM(X) \!=\! \hat{y} \mid Y\!=\!y].
	\end{equation*}
	}
	or, equivalently, for all $a \in \cA, y \in \cY$,
	{
	\begin{equation*}
	\EE[\cM(X) \mid A\!=\!a, Y \!=\! y] \!=\! 
	\EE[\cM(X) \mid Y\!=\!y].
	\end{equation*}
	}
	\item \textit{Accuracy parity}:
	$\cM$'s miss-classification rate is conditionally independent of the protected attribute: 
	{
	\begin{equation*}
		\Pr[\cM(X) \neq Y \mid A=a] = \Pr[\cM(X) \neq Y], \;\;
		\forall a \in \cA,
	\end{equation*}
	}
	or equivalently,
	{
	\begin{equation*}
	\label{eq:ap}
	  	\EE[\mathcal{L}(\cM(X), Y) \mid A \!=\! a] 
	  \!=\! \EE[\mathcal{L}(\cM(X), Y)], \;\;
	  \forall a \in \cA,
	\end{equation*}
	}
	where $\mathcal{L}$ is the loss function to minimize in problem \eqref{eq:erm}.
\end{itemize}

As noted by \cite{agarwal:18} and \cite{fioretto:20b}, 
several fairness notions, including those above, can be viewed as  equality constraints between the properties of each group with respect to the population. These constraints can be expressed as:
\begin{equation}
    \label{eqn:equality_constraint}
    \EE_{z \sim D_{P_i}}[h(z)] 
    -
    \EE_{z \sim D_{G_i}}[h(z)] = 0
\end{equation}

where, for $i$ in some index set $\cI$, $D_{P_i}$ is a subset of the dataset $D$, indicating the \emph{population term}, $D_{G_i}$ is a subset of $D_{P_i}$, indicating the \emph{group term}, and is obtained by accessing the protected attributes $A$, the function $h$ characterizes the model output under some fairness definition.

\begin{example}[Demographic parity]
	Demographic parity can be expressed as a set of $|\cA|$ constraints, with 
	$h(z) \!=\! \cM_\theta(z)$ and, for each $i \!\in\! \cA$, 
	the subsets indicating population terms are defined as:
	$$D_{P_i} \!=\! \{(X,Y) \mid (X,A,Y) \!\in\! D\},$$ 
	and the subsets indicating the group terms as: 
	$$D_{G_i} \!=\! \{(X,Y) \mid (X,A,Y) \!\in\! D \land A \!=\! i\}.$$
\end{example}

\begin{example}[Equalized odds]
	Equalized odds can be expressed as a set of $2|\cA|$ constraints, with $h(z) \!=\! \cM_\theta(z)$, and for each choice of $y \in \{0,1\}$, and $i \!\in\! \cA$, 
	the subsets indicating population terms are defined as:
	$$
	D_{P_i} \!=\! \{(X,Y) \mid (X,A,Y) \!\in\! D \land Y \!=\! y \},
	$$ 
	and the subsets indicating the group terms as: 
 	$$
 	D_{G_i} \!=\! \{(X,Y) \mid (X,A,Y) \!\in\! D \land Y \!=\! y \land A \!=\! i\}.
 	$$
\end{example}

\begin{example}[Accuracy parity]
	Accuracy parity can be expressed as a set of $|\cA|$ constraints, with $h(z) \!=\! \cL_\theta(\cM_\theta(z))$,
	where $\cL$ is the loss function defined in problem \eqref{eq:erm}, and, for each $i \!\in\! \cA$, 
	the subsets indicating population terms are defined as:
	$$
	D_{P_i} \!=\! \{(X,Y) \mid (X,A,Y) \!\in\! D\},$$ 
	and the subsets indicating the group terms as: 
	$$ 
	D_{G_i} \!=\! \{(X,Y) \mid (X,A,Y) \!\in\! D \land A \!=\! i\}.
	$$
\end{example}

\subsection{Differential Privacy}
\label{sec:differential_privacy}
Differential privacy (DP) \cite{dwork:06} is a strong privacy notion used to quantify and bound the privacy loss of an individual participation to a computation. 
While traditional DP protects the participation of an individual to a dataset used in a computation, similarly to \cite{Jagielski:20,mozannar2020fair}, this work focuses on the instance where the protection is restricted to the sensitive attributes only. 
A dataset $D \!\in\! \mathcal{D} \!=\! (\mathcal{X} \!\times\! \mathcal{A} \!\times\! \mathcal{Y})$ of size $n$ can be described as a pair $(D_P, D_S)$ where 
$D_P \!\in\! (\mathcal{X} \!\times\! \mathcal{Y})^n$ describes the \emph{public} attributes and $D_S \!\in\! \mathcal{A}^n$ describes the sensitive attributes. 
\emph{The privacy goal is to guarantee that the output of the learning model does not differ much when a single individual sensitive attribute is changed}.

The action of changing a single attribute from a dataset $D_S$, resulting in a new dataset $D_S'$, defines the notion of \emph{dataset adjacency}. Two dataset $D_S$ and $D_S' \in \mathcal{A}^n$ are said adjacent, denoted $D_S \sim D_S'$, if they differ in at most a single entry (e.g., in one individual's group membership).

\begin{definition}[Differential Privacy]
	\label{dp-def}
	A randomized mechanism $\mathcal{M} \!:\! \mathcal{D} \!\to\! \mathcal{R}$ with domain $\mathcal{D}$ and range $\mathcal{R}$ is $(\epsilon, \delta)$-differentially private w.r.t.~attribute $A$, if, for any dataset  $D_P \!\in\! (\mathcal{X} \times \mathcal{Y})^n$, any two adjacent inputs $D_S, D_S' \!\in\! \mathcal{A}^n$, and any subset of output responses $R \subseteq \mathcal{R}$:
	{
	\[
	    \Pr[\mathcal{M}(D_P, D_S) \in R ] \leq  e^{\epsilon} 
	    \Pr[\mathcal{M}(D_P, D_S') \in R ] + \delta.
	\]
	}
\end{definition}
\noindent When $\delta\!=\!0$ the algorithm is said to satisfy $\epsilon$-differential privacy. 
Parameter $\epsilon > 0$ describes the \emph{privacy loss} of the algorithm, 
with values close to $0$ denoting strong privacy, while parameter 
$\delta \in [0,1]$ captures the probability of failure of the algorithm to 
satisfy $\epsilon$-differential privacy. The global sensitivity $\Delta_f$ of a real-valued 
function $f: \mathcal{D} \to \mathbb{R}^k$ is defined as the maximum amount 
by which $f$ changes  in two adjacent inputs $D$ and $D'$:
\(
	\Delta_f = \max_{D \sim D'} \| f(D) - f(D') \|.
\)
In particular, the Gaussian mechanism, defined by
{
	\[
    \mathcal{M}(D) = f(D) + \mathcal{N}(0, \Delta_f^2 \, \sigma^2), 
\]
}
\noindent where $\mathcal{N}(0, \Delta_f\, \sigma^2)$ is 
the Gaussian distribution with $0$ mean and standard deviation 
$\Delta_f\, \sigma^2$, satisfies $(\epsilon, \delta)$-DP for 
$\delta \!>\! \frac{4}{5} \exp(-(\sigma\epsilon)^2 / 2)$ 
and $\epsilon \!<\! 1$ \cite{dwork:14}.

\section{Constrained Learning with Lagrangian Duality}
\label{sec:lagrangian_framework}
When interpreted as constraints of the form  \eqref{eqn:equality_constraint}, fairness properties can be explicitly 
imposed to problem \eqref{eq:erm}, resulting in a constrained empirical risk minimization problem. 
Solving this new problem, however, becomes challenging due to the presence  of constraints. To address this challenge, this work 
uses concepts borrowed from Lagrangian duality. 

Consider a set of $|\cI|$ constraints of the form \eqref{eqn:equality_constraint}, and expressed succinctly as: 
\begin{equation}
\label{eq:fairness_constraints}
    \bm{\mu}(D_P) - \bm{\mu}(D_G) = \bm{0}^\top,
\end{equation}
where $\bm{\mu}(D_P)$ and $\bm{\mu}(D_G)$ are vectors containing elements  
$\mu(D_{P_i}) \!=\! \hat{\EE}_{z \sim D_{P_i}}[h(z)]$
and
$\mu(D_{G_i}) \!=\! \hat{\EE}_{z \sim D_{G_i}}[h(z)]$, respectively,
for each $i \!\in\! \cI$.
Notice that the constraints in $\bm{\mu}(D_P)$ access public data only, while the constraints in $\bm{\mu}(D_G)$ access also the sensitive data. The resulting learning task is defined by the following optimization problem
{
\begin{subequations}
\label{eq:fair_learn}
	\begin{flalign}
	  \argmin_\theta &\; J (\cM_\theta, D_P) =
	  \frac{1}{n} \sum_{i=1}^n {\cal L}({\cal M}_\theta(X_i), Y_i) 
	  \label{eq:fair_lern_a}\\
	  \mbox{ subject to }
	  & \hspace{30pt} 
	  \bm{\mu}(D_P) - \bm{\mu}(D_G) = \bm{0}^\top.
	  \label{eq:fair_lern_b}
	\end{flalign}
\end{subequations}
}
\noindent In \emph{Lagrangian relaxation}, the problem constraints
are relaxed into the objective function using \emph{Lagrangian
multipliers} $\lambda_i \geq 0$ associated to each of the $|\cI|$ constraints and expressing the penalty induced by violating them. 
When all the constraints are relaxed, the \emph{Lagrangian 
function} becomes
{
\begin{equation}
\label{eq:lagrangian_function}
	\cL_{\bm{\lambda}}(\theta) = J(\cM_\theta, D_P) + 
	\bm{\lambda}^\top \left|
	\bm{\mu}(D_P) - \bm{\mu}(D_G) 
	\right|,
\end{equation}
}
where $\bm{\lambda} = (\lambda_1, \ldots, \lambda_{|\cI|})$ 
and the function $|\cdot|$, used here to denote 
the element-wise operator (i.e, $|\mu(D_{P_i}) - \mu(D_{G_i})|$ for $i \!\in\! \cI$), captures a quantification of the 
constraint violations, often used in constraint programming 
\cite{Fontaine:14}. 

Using a Lagrangian function, the optimization becomes
{
\begin{equation}
\label{eq:LR}
  \optimal{\theta}(\bm{\lambda}) = LR_{\bm{\lambda}} = \argmin_\theta \cL_{\bm{\lambda}}(\theta),
\end{equation}
}
that produces an approximation $\cM_{\optimal{\theta}(\bm{\lambda})}$ of $\cM_{\optimal{\theta}}$. The Lagrangian dual finds the best Lagrangian multipliers, i.e.,
{
\begin{equation}
\label{eq:LD}
	\textstyle \optimal{\bm{\lambda}} = \argmax_{\bm{\lambda} \geq 0} J(\cM_{\optimal{\theta}(\bm{\lambda})}, D_P),
\end{equation}
}
to obtain $\cM_{\optimal{\theta}(\optimal{\bm{\lambda}})}$, 
i.e., the strongest Lagrangian relaxation of $\cM$. 
Learning this relaxation relies on an iterative scheme that
interleaves the learning of a number of Lagrangian relaxations (for
various multipliers) with a subgradient method to learn the best
multipliers. The resulting method, called \emph{Fair-Lagrangian Dual} (F-LD) is sketched in Algorithm \ref{alg:alg1}. 
Given the input dataset $D$, the optimizer step size $\alpha \!>\! 0$, 
and step sizes $\bm{s}$, the Lagrangian
multipliers are initialized in line \ref{line:1a}. The training is
performed for a fixed number of $T$ epochs. At each epoch $k$, the
\emph{primal update} step (lines \ref{line:3a} and \ref{line:4a}) 
optimizes the model parameters $\theta$ using stochastic gradient 
descent over different mini-batches $B \subseteq D$.  The optimization 
step uses the current Lagrangian multipliers $\bm{\lambda}_k$. 
Therein, $B_P$ and $B_G$ indicate the population and group terms over a minibatch. 
After each epoch, the \emph{dual update} step (line \ref{line:5a}), 
updates the value of the Lagrangian multipliers following to a 
\emph{dual ascent} rule \cite{boyd2011distributed,DBLP:conf/aaai/FiorettoMH20}. The multipliers 
values are thus restricted to a predefined upper bound 
$\lambda^{\max}\!$ (line \ref{line:6a}). 
{\small
\begin{algorithm}[t]
  \caption{Fair-Lagrangian Dual (F-LD) \!\!\!\!\!\!\!\!\!\!\!\!\!\!\!\!}
  \label{alg:alg1}
  \setcounter{AlgoLine}{0}
  \SetKwInOut{Input}{input}

  \Input{$D=(X_i, A_i, Y_i)_{i=1}^n:$ Training data; \\
       $\alpha, \bm{s} = (s_1, s_2, \ldots):$ step sizes.\\
       $\lambda^{\max}$: Max multipliers value.}
  \label{line:1a}
  $\lambda_{1,i} \gets 0 \;\; \forall i \in \cI$\\
  \For{epoch $k =  1,2, \ldots T$} {
  \label{line:2a}
    \ForEach{Mini-batch $B \subseteq D$}{
    \label{line:3a}
      $\theta \!\gets\! \theta - 
      \alpha \nabla_{\theta} \big[
      J(\cM_\theta, B_P) \text{+} \bm{\lambda}_k^{\top}\!
      \left|\bm{\mu}(B_P) \,\text{-}\, \bm{\mu}(B_G)\right|
      \big]\!\!\!\!\!\!\!\!\!\!\!\!\!\!\!
      $
      \label{line:4a}
    }
    $\bm{\lambda}_{k+1} \gets \bm{\lambda}_k + s_k\, 
        \left|
        \bm{\mu}(D_P) - \bm{\mu}(D_G)
        \right|$\\
    \label{line:5a}
    $\lambda_{k+1,i} \gets \min(\lambda^{\max}, \lambda_{k+1,i}) \;\; \forall i \in \cI$
    \label{line:6a}
  }
\end{algorithm}
}
\section{A Private and Fair LD Model}
\label{sec:PF-LD}
To ensure fairness, the primal (line \ref{line:4a}) and dual (line 
\ref{line:5a}) updates of Algorithm \ref{alg:alg1} involve terms to  compute the violations associated to constraints \eqref{eq:fair_lern_b}. These terms rely on the attributes $A$, and therefore, the resulting model leaks the sensitive information. 
To contrast this issue, this section introduces an extension to F-LD, 
called \emph{Private and Fair Lagrangian Dual (PF-LD)} method, that 
guarantees both fairness and privacy. The idea is to render the 
computations of the primal and dual update steps differentially 
private with respect to the sensitive attributes.

\subsubsection*{Private Primal Update} 
\label{sub:private_primal_step}
At each epoch $k$, the primal update (line \ref{line:4a} of Algorithm 
\ref{alg:alg1}) 
computes the gradients over the loss function $\cL_{\bm{\lambda}_k}(\theta)$, which is composed 
of two terms (see Equation \eqref{eq:lagrangian_function}). 
The first term, $J(\cM_\theta, D_P)$, uses exclusively public information, 
while the second term, 
$\bm{\lambda}^\top |\bm{\mu}(D_P) - \bm{\mu}(D_G) |$ 
requires both the public and sensitive group information. 
The computation of these gradients can be made differentially private by the introduction of carefully calibrated Gaussian noise.
The general concept, relies on performing a \emph{differentially private 
Stochastic Gradient Descent (DP-SDG)} step \cite{abadi:16}. In a nutshell, 
DP-SDG computes the gradients for each data sample in a random mini-batch, 
clips their L2-norm, computes the average, and adds noise to ensure privacy.

The result below bounds the global sensitivity $\Delta_p$ of the 
\emph{sensitive} term in the primal update, which is needed to calibrate the noise necessary to guarantee privacy. 

\begin{theorem}
\label{thm:private_primal}
Let \( \| \nabla_\theta h(z)\| \!\leq\! C_p\), for all $z \!\in\! B_{G_i}$, $i \!\in\! \cI$, and some $C_p \!>\! 0$.
The global sensitivity $\Delta_p$ of the gradients of the constraints violation 
$\nabla_{\theta}  \bm{\lambda}^\top |\bm{\mu}(B_P) - \bm{\mu}(B_G)|$ is
\begin{small}
	\begin{equation}
	\label{eq:sensitivity_Delta_p}
	    \Delta_p 
	    \leq \frac{2 C_p \lambda^{\max}}{ 
	    \min_{i \in \cI} 
	    |B_{G_i}|-1}.
	\end{equation}
\end{small}
\end{theorem}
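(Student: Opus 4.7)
The plan is to compute the global sensitivity of
$f(B_G) := \nabla_\theta \bm{\lambda}^\top |\bm{\mu}(B_P) - \bm{\mu}(B_G)|$
directly from the definition. Let $B_G$ and $B_G'$ be the group-subset data induced by two adjacent sensitive databases in $\mathcal{A}^n$, which differ in a single record's group label (from $a$ to $a'$). By the chain rule,
\begin{equation*}
f(B_G) = \sum_{i \in \cI} \lambda_i \, \mathrm{sign}\bigl(\mu(B_{P_i}) - \mu(B_{G_i})\bigr)\,\bigl(\nabla_\theta \mu(B_{P_i}) - \nabla_\theta \mu(B_{G_i})\bigr),
\end{equation*}
so $f(B_G)-f(B_G')$ localizes to the summands that actually change. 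Because $B_P$ is built from the public features $(X,Y)$ alone, both $\mu(B_{P_i})$ and $\nabla_\theta\mu(B_{P_i})$ agree under adjacency and their contributions cancel. Likewise, only the indices $i \in \{a, a'\}$ can be affected.

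I would then focus on bounding, for each affected group $i$, the change $\|\nabla_\theta \mu(B_{G_i}) - \nabla_\theta \mu(B_{G_i}')\|$ when a single element is added to or removed from $B_{G_i}$. Writing $\nabla_\theta \mu(B_{G_i}) = \tfrac{1}{|B_{G_i}|}\sum_{z \in B_{G_i}} \nabla_\theta h(z)$ and expressing the before/after difference as a single fraction over the product of the two denominators, and then applying the hypothesis $\|\nabla_\theta h(z)\| \le C_p$ to each term, produces a per-group change bounded by $\tfrac{2 C_p}{|B_{G_i}|-1}$ (the $-1$ reflecting the smaller of the two denominators). Multiplying by $\lambda_i \le \lambda^{\max}$, applying the triangle inequality across the (at most two) affected groups, and finally using $|B_{G_i}| \ge \min_{i\in\cI}|B_{G_i}|$ yields the stated bound \eqref{eq:sensitivity_Delta_p}.

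The main delicate point is the non-smoothness of the absolute value: the sign factor $\mathrm{sign}(\mu(B_{P_i}) - \mu(B_{G_i}))$ can flip between $B_G$ and $B_G'$ when the constraint violation passes through zero. At such a crossing, however, any valid subgradient of $|\cdot|$ lies in $[-1,1]$ and both signed gradient terms satisfy the same norm bound $\|\nabla_\theta \mu(B_{G_i})\| \le C_p$, so the flipped-sign case is dominated by the same estimate as the same-sign case. Once this edge case is handled, everything else is routine algebraic bookkeeping of two empirical means whose denominators differ by one, which delivers the constants in \eqref{eq:sensitivity_Delta_p}.
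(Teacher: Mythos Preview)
Your overall strategy mirrors the paper's proof: localize to the two affected group indices, use $B_P=B_P'$ to cancel the population terms, bound the change in each group's empirical-mean gradient, and sum. Two points, however, prevent the argument from closing as written.

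First, the constants do not land on \eqref{eq:sensitivity_Delta_p}. Your per-group estimate $\tfrac{2C_p}{|B_{G_i}|-1}$, multiplied by $\lambda_i\le\lambda^{\max}$ and summed over the two affected indices via the triangle inequality, delivers $\tfrac{4C_p\lambda^{\max}}{\min_i|B_{G_i}|-1}$, which is twice the stated bound. The paper instead bounds each affected group's contribution by $\tfrac{\lambda_i C_p}{|B_{G_i}|-1}$ before summing; to reproduce the theorem's constant you need that sharper single-group estimate rather than $\tfrac{2C_p}{|B_{G_i}|-1}$.

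Second, the sign-flip case is a genuine gap, not an edge case absorbed by the same-sign estimate. If $s=\operatorname{sign}(\mu(B_{P_i})-\mu(B_{G_i}))$ and $s'=\operatorname{sign}(\mu(B_{P_i})-\mu(B'_{G_i}))$ differ, the $i$-th summand of $f(B_G)-f(B_G')$ equals $\lambda_i s\bigl(2\nabla_\theta\mu(B_{P_i})-\nabla_\theta\mu(B_{G_i})-\nabla_\theta\mu(B'_{G_i})\bigr)$: the population gradient no longer cancels and contributes up to $2\lambda_i\|\nabla_\theta\mu(B_{P_i})\|\le 2\lambda^{\max}C_p$ with \emph{no} $1/|B_{G_i}|$ suppression. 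Observing that subgradients of $|\cdot|$ lie in $[-1,1]$ and that $\|\nabla_\theta\mu(B_{G_i})\|\le C_p$ does not restore the $O(1/|B_{G_i}|)$ scale. The paper's derivation sidesteps this by passing directly from the difference of the two absolute-value gradients to $\|\nabla_\theta\lambda_i|\mu(B'_{G_i})-\mu(B_{G_i})|\|$, a step that is only valid when the two signs agree; a rigorous treatment would need either an assumption ruling out sign flips or a separate bound for that regime.
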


 The above uses a clipping term, $C_p$, to control the maximal change of the gradients. 
Crucially, this is non-limiting, as it can be enforced by clipping 
the gradient contribution $\left\| \nabla_{\theta} h(z) \right\|$ to $C_p$, similarly to what done in DP-SDG.

\begin{proof}
Consider two neighboring dataset $B$ and $B'$ differing in the 
membership to a protected group of one participating sample $z =(X,A,Y)$.
W.l.o.g., consider a sample $z$ that changes membership from group 
$A=k_1$ to $A'=k_2$. Upon this change, there are exactly two groups 
($B_{G_{k_1}}$ and $B_{G_{k_2}}$) whose sizes are being affected. 
Namely, one group size increases while the other decrease.
Additionally notice that $|B_{G_i} - B_{G_i}'| \leq 1$ when 
$i \in \{k_1, k_2\}$. 
Additionally, notice that this change does not impact the 
\emph{population terms} $B_P$ and $B_P'$. 
For $i \in \{k_1, k_2\}$, the gradient contributions
of the constraint violations associated to the group terms can be bound
as:
\begin{subequations}
\label{a:eq:p0}
\begin{align} 
&\left\| 
    \nabla_\theta \lambda_i 
    \left|  \mu(B_{P_i}) - \mu(B_{G_i}) \right|
    -
    \nabla_\theta \lambda_i 
    \left|  \mu(B'_{P_i}) - \mu(B'_{G_i}) \right|
\right\| \\
=&
\left\| 
    \nabla_\theta \lambda_i 
    \left|  \mu(B'_{G_i}) - \mu(B_{G_i}) \right|
\right\| \label{a:eq:p1} \\
=&
\lambda_i 
\left\| 
    \nabla_\theta \hat{\EE}_{z \sim B'_{G_i}} \left[h(z)\right] -
    \nabla_\theta \hat{\EE}_{z \sim B_{G_i}} \left[h(z)\right]
\right\| \label{a:eq:p2} \\
=&
\lambda_i 
\left\| 
    \nabla_\theta \frac{1}{|B'_{G_i}|} \sum_{z \in B'_{G_i}} h(z) -
    \nabla_\theta \frac{1}{|B_{G_i}|} \sum_{z \in B_{G_i}} h(z) 
\right\| \\
\leq&
\lambda_i 
\left\| 
    \nabla_\theta \frac{1}{|B_{G_i}|-1} h(z)
\right\| \label{a:eq:p3} \\
=&
\frac{\lambda_i}{|B_{G_i}|-1}
\left\| 
    \nabla_\theta h(z)
\right\| 
\leq 
\frac{\lambda_i C_p}{|B_{G_i}|-1}
\label{a:eq:p4}
\end{align}
\end{subequations}
where equation \eqref{a:eq:p1} follows from that $B_P = B_P'$ (and thus 
$\mu(B_{P_i}) = \mu(B_{P'_i})$) and since the whole expression is under
a norm operator, 
equation \eqref{a:eq:p2} from that $\lambda_i \geq 0$ and from
definition of the $\mu$ terms, 
equation \eqref{a:eq:p3} follows from the notion of adjacent datasets 
and that there is a single element differing between $B_{G_i}$ and 
$B'_{G_i}$, and, finally, equation \eqref{a:eq:p4} follows from the 
theorem assumption. 
Therefore, the global sensitivity of the gradient contributions of the 
constraint violations can be bounded above as:
\begin{subequations}
\label{a:eq:p5}
\begin{align} 
\Delta_p &= \max_{B,B'}
    \left\| 
    \nabla_{\theta}  \bm{\lambda}^\top |\bm{\mu}(B_P) - \bm{\mu}(B_G)|
    -
    \nabla_{\theta} \bm{\lambda}^\top |\bm{\mu}(B_P') - \bm{\mu}(B_G')|
    \right\|\\
&\leq
\sum_{i \in \cI}
\left\| 
\nabla_{\theta}  \lambda_i |\mu(B_{P_i}) - \mu(B_{G_i})|
-
\nabla_{\theta} \lambda_i |\mu(B_{P_i}') - \mu(B_{G_i}')|
\right\|\\
&\leq
    \frac{C_p \lambda_{k_1}}{|B_{G_{k_1}}|-1} + 
    \frac{C_p \lambda_{k_2}}{|B_{G_{k_2}}|-1}
\leq
    \frac{2 C_p \lambda^{\max}}{ 
    \min_{i \in \cI} |B_{G_i}|-1}.
\end{align}
\end{subequations}
where the last inequality follows from Equation \eqref{a:eq:p0}, and 
noticing that (1) there are exactly two groups $B_{G_i}$ 
($i \in \{k_1, k_2\}$) whose size is being affected, 
(2) for any $i\in \cI$, $\lambda_{i} \leq \lambda^{\max}$ 
and (3) $|B_{G_{i}}| \geq \min_{i \in \cI} |B_{G_i}|$.
\end{proof}

Using Theorem \ref{thm:private_primal}, the privacy-preserving primal update step for a mini-batch $B \!\subseteq\! D$ can be executed by clipping exclusively the gradients of the functions 
$h(z)$ associated with the group terms in $B_G$. \emph{It is not necessary to perform gradient clipping for the functions $h(z)$ associated with the population terms in $B_P$}. 
While this may induce propagating population and group terms gradients of different magnitudes, the authors observed often improved performance in the adopted setting. 
Thus, PF-LD substitutes line \ref{line:4a} of Algorithm \ref{alg:alg1} with the following
{
\begin{align}
\label{eqn:private_primal_update}
	\theta \gets \theta - \alpha 
	\big( \nabla_\theta \left[ J(\cM_\theta, B_P)  \right] +
  	\bm{\lambda}^\top 
  	\left| 
  	\nabla_\theta \bm{\mu}(B_P) - \bar{\nabla}^{C_p}_\theta 
  	\bm{\mu}(B_G) \right|
	+ \cN(0, \sigma_p^2\ \Delta^2_p \bm{I}) 
	\!\big), 
\end{align}
}
\noindent
with $\bm{I} \!\in\! \{0,1\}^{|\cI| \times |\cI|}$, $\sigma_p \!>\! 0$, and 
$\bar{\nabla}^{C_p}_\theta$ is applied to each element $\mu(B_{G_i})$
of vector $\bm{\mu}(B_G)$, where 
\[
    \bar{\nabla}_\theta^{C_p}(x) \!=\! \frac{\nabla x}{\max (1, \frac{\|\nabla x\|}{C_p})}
\]
denotes the gradients of a given scalar loss $x$ clipped in a $C_p$-ball, for $C_p > 0$.

\subsubsection*{Private Dual Update} 
\label{sub:private_dual_step}
Similar to the primal step, the dual update requires access to the 
sensitive group information (see line \ref{line:5a} of Algorithm 
\ref{alg:alg1}). It updates the multipliers based on amount of constraint 
violation 
\(
	\left|
		\bm{\mu}(D_P) - \bm{\mu}(D_G) 
	\right|
\)
computed over the entire dataset $D$. Privacy can be attained by injecting Gaussian 
noise to the computation of the multipliers, but computing the global 
sensitivity $\Delta_d$ of the constraint violations is non-trivial 
since the range of the violations is unbounded. Once again, the paper 
recurs to the adoption of a clipping term, $C_d$, that controls 
the maximal contribution of the constraint violation to the associated 
multiplier value. 
\begin{theorem}
\label{thm:private_dual}
    Let \( |h(z)| \!\leq\! C_d\), for all samples $z \in D_{G_i}$, $i \!\in\! \cI$, and some $C_d \!>\! 0$. The global sensitivity 
    $\Delta_d$ of the constraint violation 
	$\left| \bm{\mu}(D_P) - \bm{\mu}(D_G)\right|$ is
    \begin{align}
        \label{eq:sensitivity_Delta_d}
        \Delta_d 
        &\leq \frac{\sqrt{2} C_d}{\min_{i \in \cI} |D_{G_i}|-1}.
    \end{align}
\end{theorem}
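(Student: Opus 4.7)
The plan is to mirror the structure of the proof of Theorem~\ref{thm:private_primal}, but for the scalar constraint violation $\bm{\mu}(D_P)-\bm{\mu}(D_G)$ rather than its gradient, and to aggregate the per-coordinate changes through the $\ell_2$ norm (which is the norm appearing in the definition of global sensitivity) rather than via the triangle inequality. First I would fix two adjacent datasets $D \sim D'$ differing in the sensitive attribute of a single sample $z=(X,A,Y)$, say from $A=k_1$ in $D$ to $A=k_2$ in $D'$. As in the primal case, this change leaves $D_P = D_P'$, hence $\mu(D_{P_i}) = \mu(D_{P_i}')$ for every $i \in \cI$, and it affects exactly the two group subsets $D_{G_{k_1}}$ and $D_{G_{k_2}}$ (one shrinks by one sample, the other grows by one sample). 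Thus the vector $|\bm{\mu}(D_P)-\bm{\mu}(D_G)| - |\bm{\mu}(D_P')-\bm{\mu}(D_G')|$ is zero in every coordinate $i \notin \{k_1,k_2\}$.

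Next I would bound each of the two non-zero coordinates. For $i\in\{k_1,k_2\}$ the reverse triangle inequality gives
\[
    \bigl|\,|\mu(D_{P_i})-\mu(D_{G_i})|-|\mu(D_{P_i}')-\mu(D_{G_i}')|\,\bigr|
    \;\leq\; |\mu(D_{G_i}) - \mu(D_{G_i}')|,
\]
so the problem reduces to bounding the change of the empirical mean $\hat{\EE}_{z\sim D_{G_i}}[h(z)]$ when one sample is added or removed. Writing this difference explicitly (exactly as in lines \eqref{a:eq:p2}--\eqref{a:eq:p4} of the primal proof, but without the gradient) and invoking the hypothesis $|h(z)|\le C_d$, I obtain a bound of the form $|\mu(D_{G_i})-\mu(D_{G_i}')|\le \frac{C_d}{|D_{G_i}|-1}$ for each affected coordinate.

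Finally, since only two coordinates of the change vector are non-zero, the $\ell_2$ sensitivity is
\[
    \Delta_d \;=\; \max_{D\sim D'}\bigl\|\,|\bm{\mu}(D_P)-\bm{\mu}(D_G)|-|\bm{\mu}(D_P')-\bm{\mu}(D_G')|\,\bigr\|
    \;\leq\; \sqrt{\Bigl(\tfrac{C_d}{|D_{G_{k_1}}|-1}\Bigr)^{\!2}+\Bigl(\tfrac{C_d}{|D_{G_{k_2}}|-1}\Bigr)^{\!2}},
\]
and upper-bounding each denominator by $\min_{i\in\cI}|D_{G_i}|-1$ produces the stated $\sqrt{2}\,C_d/(\min_{i\in\cI}|D_{G_i}|-1)$. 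The only subtle point (and the main place where this proof departs from Theorem~\ref{thm:private_primal}) is at this last step: the primal proof used the triangle inequality across coordinates, which yields a factor of $2$, whereas here it is essential to keep the genuine $\ell_2$ norm over the two perturbed coordinates to recover the tighter $\sqrt{2}$. Beyond that, the remaining work is the routine manipulation of sums-of-bounded-scalars that mirrors \eqref{a:eq:p0}.
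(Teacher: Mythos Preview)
Your proposal is correct and matches the paper's own proof in all essential steps: fix the two affected groups $k_1,k_2$, use that the population terms are unchanged to reduce each coordinate to $|\mu(D_{G_i})-\mu(D'_{G_i})|$, bound this by $C_d/(|D_{G_i}|-1)$ via $|h(z)|\le C_d$, and then combine the two non-zero coordinates in $\ell_2$ to produce the $\sqrt{2}$ factor. Your explicit appeal to the reverse triangle inequality and the direct $\ell_2$ aggregation over only the two perturbed coordinates is in fact cleaner than the paper's presentation (which first passes through a triangle-inequality sum over $i\in\cI$ before returning to the two-term $\ell_2$ expression), but the underlying argument is the same.
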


\begin{proof}
Consider two neighboring datasets $D$ and $D'$ 
differing by the group membership of one particular sample $z$.  
Using a similar argument as that used in the proof of Theorem 
\ref{thm:private_primal}, consider a sample $z$ that changes membership 
from group $A=k_1$ to $A'=k_2$. Upon this change, there are exactly 
two groups ($B_{G_{k_1}}$ and $B_{G_{k_2}}$) whose sizes are being 
affected. Namely, one group size increases while the other decrease.
Additionally notice that $|B_{G_i} - B_{G_i}'| \leq 1$ when 
$i \in \{k_1, k_2\}$. Additionally, notice that this change does not 
impact the \emph{population terms} $B_P$ and $B_P'$. 

The maximal amount of difference between the conditional empirical 
mean $\mu(D_{G_i})$ constructed from $D$ and $\mu(D'_{G_i})$ constructed 
from $D'$, for any given $D, D'$, can be bounded as
\begin{subequations}
\label{a:eq:p2_0}
\begin{align} 
&\left\| 
    \left|  \bm{\mu}(D_{P}) - \bm{\mu}(D_{G}) \right|
    -
    \left|  \bm{\mu}(D'_{P}) - \bm{\mu}(D'_{G}) \right|
\right\| \label{a:eq:p2_1}\\
\leq&
\sum_{i \in \cI}
\left\| 
    \left|  \mu(D_{P_i}) - \mu(D_{G_i}) \right|
    -
    \left|  \mu(D'_{P_i}) - \mu(D'_{G_i}) \right|
\right\| \label{a:eq:p2_2}\\
=&
\sum_{i \in \cI}
\left\| 
    \left| \mu(D'_{G_i}) - \mu(D_{G_i}) \right|
\right\| \label{a:eq:p2_3} \\
=&
\sum_{i \in \cI}
\left\| 
    \left|\hat{\EE}_{z \sim D'_{G_i}} [h(z) ] -  
          \hat{\EE}_{z \sim D_{G_i}}[h(z)] \right| 
\right| \label{a:eq:p2_4}\\
=  &
\sum_{i \in \cI}
\left\|
    \left| \frac{1}{|D'_{G_i}|}\sum_{z \in D'_{G_i}} h(z)  
         - \frac{1}{|D_{G_i}|} \sum_{z \in D_{G_i} } h(z)\right| 
\right\| \label{a:eq:p2_5}\\ 
\leq &
\left\|
    \frac{|h(z)|}{|D_{G_{k_1}}| - 1} + \frac{|h(z)|}{|D_{G_{k_2}}| - 1}
\right\| \label{a:eq:p2_6}\\
\leq &
\left\|
    \frac{C_d}{|D_{G_{k_1}}| - 1} + \frac{C_d}{|D_{G_{k_2}}| - 1}
\right\| \label{a:eq:p2_7}\\
=&
    \sqrt{\left(\frac{C_d}{|D_{G_{k_1}}|-1}\right)^2 + \left(\frac{C_d}{|D_{G_{k_2}}|-1}\right)^2}
\label{a:eq:p2_8}\\
\leq& \frac{\sqrt{2} C_d}{\min_{i \in \cI} |D_{G_i}-1|}
\label{a:eq:p2_9}
\end{align}
\end{subequations}
where \eqref{a:eq:p2_2} follows from triangle inequality, 
\eqref{a:eq:p2_3} from that $D_{P_i} = D'_{P_i}$, for any $i \in \cI$
(and thus $\mu(B_{P_i}) = \mu(B_{P'_i})$) and since the whole expression 
is under a norm operator, 
\eqref{a:eq:p2_4} and \eqref{a:eq:p2_5} follow by definition of the 
function $\mu$. 
The inequality \eqref{a:eq:p2_6} follows from that $D_{G_i}$ and $D_{G_i}'$ 
differ in at most one element (when $i = k_1$ or $i = k_2$) and
\eqref{a:eq:p2_7}, by assumption \(|h(z)|\leq C_d\), and, finally,
\eqref{a:eq:p2_9} follows by noting that 
$|D_{G_i}| \geq \min_{i \in \cI} |D_{G_i}|$.
\end{proof}

The privacy-preserving dual update step, used in lieu of line \ref{line:5a} of Algorithm \ref{alg:alg1}, is given by the following
{
\begin{align}
    \label{eqn:private_dual_update}
   \!\!\!\bm{\lambda}_{k+1} \gets  \bm{\lambda}_{k}  &+ s_k \left(
   | \bm{\mu}(D_P) - \bar{\bm{\mu}}^{C_d}(D_G)|\!+\! 
   \cN(0, \sigma_d^2\Delta_d^2 \bm{I}) \right)\!\!
\end{align}
}
with $\bm{I} \in \{0,1\}^{|\cI| \times |\cI|}$, $\sigma_d > 0$, and where, for every $i \in \cI$,
{
\begin{equation*}
    \bar{\mu}^{C_d}(D_{G_i}) = 
    \hat{\EE}_{z \sim D_{G_i}} 
    \left[\frac{h(z)}{\max(1, \frac{|h(z)|}{C_d})} 
	\right].
\end{equation*}
}
Note that while Theorem \ref{thm:private_primal} bounds the individual gradient norms of each functions $h(z)$ for samples $z \!\in\! B_{G_i}$ and $i \!\in\! \cI$, Theorem \ref{thm:private_dual} bounds their maximum absolute values. 
The choice of terms $C_p$ and $C_d$ plays a special role 
in limiting the impact of an individual change in the protected attributes. 
It controls, indirectly, the privacy loss, as it impacts the global 
sensitivities $\Delta_p$ and $\Delta_d$. However, these terms also 
affect the model accuracy and fairness. 
In particular, larger $C_p$ values will propagate more precise gradients ensuring better accuracy and model fairness. 
On the other hand, larger clipping values will also introduce more  noise, and thus degrade the information propagated. The converse is true for smaller clipping values.
A theoretical and experimental analysis of the impact of these terms to the model accuracy and fairness is provided in the next sections.

\subsection{Privacy Analysis}
\label{sec:privacy_analysis}

The privacy analysis of PF-LD relies on the moment accountant for Sampled 
Gaussian (SG) mechanism \cite{mironov2019rnyi}, whose privacy is analyzed 
using \emph{R\'{e}nyi Differential Privacy} (RDP) \cite{Mironov_2017}. 

\begin{definition}
\label{definition7}
[Sampled Gaussian Mechanism]
Let $f:S\subseteq D \to \RR^d$ be a function mapping subsets $S$ of the input data $D$ to $\RR^d$. 
The Sampled Gaussian (SG) mechanism with sampling rate $0 < q \leq 1 $ and standard deviation $\sigma > 0$ is defined as follows:
{
\begin{align*}
SG_{q,\sigma}(D) \triangleq 
f(\{x: x \in D \  \mbox{is sampled with probability q}  \})
+ \cN(0, \sigma^2 \bm{I}),
\end{align*}
}
where each element of $D$ is sampled independently at random without replacement with probability $q$, and $\cN(0, \sigma^2 \bm{I})$ is the spherical $d$-dimensional Gaussian noise with per-coordinate variance $\sigma^2$.
\end{definition}

\begin{theorem}(($\alpha,\epsilon$)-RDP)
\label{thm:RDP}
A randomized mechanism $f:\cD \to \cR$ with domain $\cD$ and range $\cR$ is said to have $\epsilon$-R\'{e}nyi differential privacy of order $\alpha$, or ($\alpha,\epsilon$)-RDP for short, if for any adjacent $D,D' \in \cD$ it holds that
\[
    \cD_\alpha(f(D) \parallel f(D')) \leq \epsilon,
\]
where 
$\cD_{\alpha} (P \parallel Q) \triangleq \frac{1}{1-\alpha} 
\log \EE_{x \sim Q} \left( \frac{P(x)}{Q(x)}\right)^{\alpha}$ is the R\'{e}nyi divergence of order $\alpha$ between two probability distributions $P$ and $Q$. 
\end{theorem}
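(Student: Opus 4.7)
The statement as written is definitional in form rather than assertive: it says ``$f$ \emph{is said to have} $(\alpha,\epsilon)$-RDP \emph{if} $\cD_\alpha(f(D)\parallel f(D')) \leq \epsilon$ for all adjacent $D,D'$.'' Strictly speaking there is no proof to produce --- the ``theorem'' is the RDP definition from \cite{Mironov_2017}. Accordingly, my plan treats the proposal as sanity-checking that the definition is coherent and deriving the consequences that the paper will actually need in Section \ref{sec:privacy_analysis}, since those are the substantive facts one ordinarily packages together with this definition.

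First, I would establish well-posedness of the Rényi divergence $\cD_\alpha(P\parallel Q) = \tfrac{1}{1-\alpha}\log \EE_{x\sim Q}\bigl(P(x)/Q(x)\bigr)^\alpha$ for the regime used in the paper, namely $\alpha > 1$ with $P \ll Q$ (both conditions hold for the Sampled Gaussian mechanism of Definition \ref{definition7} since Gaussians have full support). Nonnegativity follows from Jensen's inequality applied to the convex map $t \mapsto t^\alpha$, so the condition $\cD_\alpha(f(D)\parallel f(D')) \leq \epsilon$ is a meaningful bound. I would then note the two limiting behaviors that justify the name: $\alpha \to 1$ recovers KL divergence, and $\alpha \to \infty$ recovers $\log\sup_x P(x)/Q(x)$, which is precisely the $\epsilon$-DP quantity. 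This shows $(\infty,\epsilon)$-RDP coincides with pure $\epsilon$-DP, situating the definition relative to Definition \ref{dp-def}.

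The substantive content one would actually prove alongside this definition, and what the privacy accountant in PF-LD leans on, is the conversion lemma: if $f$ is $(\alpha,\epsilon)$-RDP then it is $\bigl(\epsilon + \tfrac{\log(1/\delta)}{\alpha-1},\,\delta\bigr)$-DP for every $\delta\in(0,1)$. The plan here follows Mironov's argument: fix an output event $R$, write the likelihood ratio as a random variable under $f(D')$, apply Markov's inequality to $\bigl(P/Q\bigr)^{\alpha-1}$ using the RDP bound to control its $(\alpha{-}1)$-th moment, and split the probability $\Pr[f(D)\in R]$ according to whether the likelihood ratio exceeds the threshold $e^{\epsilon'}$. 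Optimizing over $\alpha$ yields the tightest conversion.

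The main obstacle in a fully rigorous treatment is not any single calculation but choosing the right parameterization so that the subsequent composition theorem --- which sums RDP $\epsilon$-values linearly across the $T$ primal iterations \eqref{eqn:private_primal_update} and the $T$ dual iterations \eqref{eqn:private_dual_update} --- gives a tight final $(\epsilon,\delta)$-DP guarantee after conversion. I would defer these quantitative choices to the privacy accountant discussion, since at the level of the present statement all that is required is to register the definition and its two essential properties (nonnegativity and DP-conversion) that make it usable in the PF-LD analysis.
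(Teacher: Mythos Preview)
Your reading is correct: the paper provides no proof for this ``theorem'' because it is purely the definition of $(\alpha,\epsilon)$-RDP lifted from \cite{Mironov_2017}, and the paper treats it as such---it simply states the definition and moves on, later citing the RDP-to-$(\epsilon,\delta)$-DP conversion and RDP composition as external facts from \cite{mironov2017renyi} rather than deriving them. Your supplementary material on well-posedness, limiting behavior, and the conversion lemma goes beyond what the paper does, but is entirely appropriate context; there is no gap.
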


The privacy analysis of the SG mechanism is described by the following Theorem from \cite{mironov2019rnyi}.
\begin{theorem}
\label{thm:RDP_sampling}
For a given function $f: D \to \RR^d$, such that $||f(D) - f(D') ||_2 \leq 1$ for any neighboring databases $D, D'$, the SG mechanism $SG_{q,\sigma}$ with sampling ratio $q$ and standard deviation $\sigma$ satisfies $(\alpha, \epsilon)$-RDP with:
{
\begin{subequations}
 \label{eq_app:RDP}
    \begin{align}
    \epsilon  & \leq \cD_{\alpha} 
        \left[ \cN (0, \sigma^2 ) \;\parallel\; (1-q) \cN (0, \sigma^2 ) + q \cN(1, \sigma^2 )  
        \right] \label{eq_app:RDPa}\\
    \epsilon & \leq \cD_{\alpha} 
        \left[ (1-q) \cN (0, \sigma^2 ) 
                + q \cN (1, \sigma^2 ) \;\parallel\; 
                \mathcal{N} (0, \sigma^2 ) 
        \right]. \label{eq_app:RDPb}
    \end{align}
\end{subequations}
}
\end{theorem}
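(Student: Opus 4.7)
The plan is to reduce the multidimensional statement to a one-dimensional comparison of $\mathcal{N}(0,\sigma^2)$ against the mixture $(1-q)\mathcal{N}(0,\sigma^2)+q\mathcal{N}(1,\sigma^2)$, and then read off the two bounds from the asymmetry of the R\'enyi divergence. First, I would fix two adjacent datasets and reduce to the add/remove case $D' = D \cup \{x^\ast\}$ (deletion being symmetric). Writing $S$ for the Poisson subsample of $D$ with inclusion probability $q$, the output under $D$ is distributed as $f(S)+\xi$ with $\xi \sim \mathcal{N}(0,\sigma^2 I)$, while the output under $D'$ has the same law except that, with probability $q$, the extra element $x^\ast$ is included, producing a shift $\Delta(S) \!=\! f(S\cup\{x^\ast\})-f(S)$ of norm at most $1$. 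Conditional on $S$, I am then comparing $f(S)+\xi$ to the two-point Gaussian mixture with weights $1-q$ and $q$ centered at $f(S)$ and $f(S)+\Delta(S)$ respectively.

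Next, I would invoke the data-processing inequality for $\mathcal{D}_\alpha$ and the joint quasi-convexity of R\'enyi divergence to collapse both the ambient dimension and the randomness in $S$ simultaneously. Projecting onto the unit vector parallel to $\Delta(S)$ (a post-processing step) reduces each conditional comparison to the one-dimensional pair $\mathcal{N}(0,\sigma^2)$ versus $(1-q)\mathcal{N}(0,\sigma^2)+q\mathcal{N}(c,\sigma^2)$ with $c=\|\Delta(S)\| \leq 1$. A short monotonicity argument (the divergence between a Gaussian and a Gaussian-mixture translated by $c$ is nondecreasing in $|c|$ on $[0,1]$) then lets me replace $c$ by $1$ uniformly, and quasi-convexity lets me average out $S$ to obtain a single scalar divergence bound.

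Finally, the two inequalities \eqref{eq_app:RDPa} and \eqref{eq_app:RDPb} correspond to the two orientations of $\mathcal{D}_\alpha$. Running the argument above with $D$ in the first slot of the divergence produces \eqref{eq_app:RDPa}, where the single Gaussian is the reference; running it with $D$ and $D'$ swapped produces \eqref{eq_app:RDPb}, where the mixture plays the reference role. Since RDP requires a uniform bound over both orientations, both inequalities are retained and one would ultimately take the tighter one for concrete calculations.

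The hardest step is the reduction carried out in the second paragraph: because $\Delta(S)$ depends on the random subsample, the one-dimensional reduction is not a single post-processing but a whole family of them indexed by $S$. Gluing these back together without losing tightness requires the joint (quasi-)convexity of $\mathcal{D}_\alpha$ in both of its arguments and some care in showing that the worst case is achieved on a single representative shift of magnitude one; this is precisely where the analysis in \cite{mironov2019rnyi} does the heavy technical lifting. Once that reduction is established, the remaining content is a direct comparison of one-dimensional Gaussian densities.
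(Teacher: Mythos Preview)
The paper does not supply its own proof of this statement: Theorem~\ref{thm:RDP_sampling} is quoted verbatim from \cite{mironov2019rnyi} and used as a black box in the subsequent privacy accounting (Lemmas~\ref{lm:dp_primal}--\ref{lm:dp_dual} and Theorem~\ref{thm:dp_PFLD}). So there is no in-paper argument to benchmark your proposal against.

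That said, your sketch is a faithful outline of the proof in \cite{mironov2019rnyi}: the reduction to the add/remove case under Poisson subsampling, the interpretation of the output under $D'$ as a $(1-q,q)$ Gaussian mixture conditional on the subsample, the one-dimensional projection along the shift direction via data processing, the monotonicity in the shift magnitude to push $\|\Delta(S)\|$ up to $1$, and the two orientations giving \eqref{eq_app:RDPa} and \eqref{eq_app:RDPb}. You also correctly flag the genuinely delicate point, namely that the projection direction depends on the random subsample $S$, so the dimension reduction is not a single post-processing map and one must appeal to quasi-convexity of $\mathcal{D}_\alpha$ to mix over $S$ without loss. One small caveat: the adjacency notion in Definition~\ref{dp-def} of this paper is substitution on the sensitive attribute, whereas the SG analysis you (and \cite{mironov2019rnyi}) carry out is naturally stated for add/remove; the two are related up to a constant factor in sensitivity, but if you were writing this out in full you would want to make that translation explicit.
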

The R\'{e}nyi divergences appearing in Equations \eqref{eq_app:RDPa} and \eqref{eq_app:RDPb} can be computed numerically according to the procedure described in \cite{mironov2019rnyi}. 
Additionally, RDP enjoys composition: if $\cM_1$ and $\cM_2$ are mechanisms satisfying $(\alpha, \epsilon_1)$-RDP and $(\alpha, \epsilon_2)$-RDP, respectively, then their composition satisfies $(\alpha, \epsilon_1 + \epsilon_2)$-RDP \cite{mironov2017renyi}

The following results consider a PF-LD algorithm that trains a model over $T$ epochs using a dataset $D$ containing $n$ training samples, uses mini-batches $B$ at each iteration, and standard deviation parameters $\sigma_p$ and $\sigma_d$, associated to the primal and dual steps, respectively. Note that, Equations \eqref{eqn:private_primal_update} and \eqref{eqn:private_dual_update}, correspond to instances of the SG mechanism. 
\begin{lemma}
	\label{lm:dp_primal}
	The PF-LD primal step satisfies $(\alpha, \epsilon_p)$-RDP, where 
	$\epsilon_p$ satisfies Equations \eqref{eq_app:RDP} 
	with $q \!=\! \nicefrac{|B|}{n}$ and $\sigma=\sigma_p\Delta_p$ are, respectively, the SG sampling ratio $q$ and standard deviation parameters. 
\end{lemma}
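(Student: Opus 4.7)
The strategy is to recast a single PF-LD primal iteration as an instance of the Sampled Gaussian (SG) mechanism on the sensitive dataset $D_S$ and then invoke Theorem~\ref{thm:RDP_sampling}.

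First, I would decompose the bracketed stochastic gradient in~\eqref{eqn:private_primal_update} into a public part $\nabla_\theta J(\cM_\theta, B_P)$, which depends only on the public attributes $(X_i,Y_i)_{i=1}^n$ and hence carries no information about $D_S$, and a sensitive part $g(D_S) := \bm{\lambda}^\top \bigl| \nabla_\theta \bm{\mu}(B_P) - \bar{\nabla}^{C_p}_\theta \bm{\mu}(B_G) \bigr|$. Since adding the public gradient and performing the SGD step are deterministic operations on the noisy sensitive quantity, post-processing invariance of RDP reduces the claim to establishing $(\alpha, \epsilon_p)$-RDP for the random vector $g(D_S) + \cN(0, \sigma_p^2 \Delta_p^2 \bm{I})$.

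Second, I would match the remaining ingredients to Definition~\ref{definition7}. The mini-batch $B$ is drawn by including each sample of $D$ with probability $q = |B|/n$, which is exactly the independent subsampling of $SG_{q,\sigma}$, and the additive noise is spherical Gaussian with per-coordinate standard deviation $\sigma_p \Delta_p$. The clipping operator $\bar{\nabla}^{C_p}_\theta$ enforces $\| \nabla_\theta h(z) \| \le C_p$ deterministically for every $z$, so the hypothesis of Theorem~\ref{thm:private_primal} holds and yields $\| g(D_S) - g(D_S') \|_2 \le \Delta_p$ for all adjacent $D_S, D_S'$. Rescaling $g$ by $1/\Delta_p$ produces a function of unit $\ell_2$-sensitivity whose corresponding Gaussian perturbation has standard deviation $\sigma_p$, placing us inside the hypotheses of Theorem~\ref{thm:RDP_sampling}.

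Third, I would invoke Theorem~\ref{thm:RDP_sampling} with sampling ratio $q$ and the matching noise-to-sensitivity parameter to obtain $(\alpha, \epsilon_p)$-RDP with $\epsilon_p$ bounded by Equations~\eqref{eq_app:RDP}. Undoing the deterministic rescaling and re-adding the public gradient are post-processing operations that preserve the RDP guarantee, concluding the proof.

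The main point to be careful about is ensuring that the mini-batch sampling scheme genuinely matches the independent (Poisson-style) subsampling assumed in Definition~\ref{definition7}; this is typically handled either by specifying PF-LD's mini-batches to be formed by independent inclusion coin flips as in~\cite{abadi:16}, or by invoking a standard amplification-by-sampling equivalence for fixed-size sampling without replacement. The sensitivity bound $\Delta_p$ is already supplied by Theorem~\ref{thm:private_primal}, so beyond this bookkeeping the lemma follows by a direct reduction to the SG-mechanism RDP analysis and requires no new concentration or perturbation argument.
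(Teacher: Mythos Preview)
Your proposal is correct and follows the same route as the paper, which simply asserts that the claim ``follows directly'' from Definition~\ref{definition7} and Theorems~\ref{thm:RDP} and~\ref{thm:RDP_sampling} because the primal step adds Gaussian noise with parameter $\sigma_p$ to a subsampled mini-batch of size $|B|$. Your write-up is in fact more careful than the paper's one-line proof: you make explicit the public/sensitive decomposition, the post-processing step, the rescaling to unit sensitivity, and the Poisson-versus-fixed-size sampling caveat, none of which the paper spells out.
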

\begin{proof}
The result follows directly by Definition \ref{definition7} and Theorems \ref{thm:RDP} and \ref{thm:RDP_sampling} since the primal step of PF-LD uses Gaussian noise with parameter $\sigma_p$ over a subsample of the dataset of size $|B|$.
\end{proof}

\begin{lemma}
	\label{lm:dp_dual}
	The PF-LD dual step satisfies $(\alpha, \epsilon_d)$-RDP, where $\epsilon_d$ satisfies Equations \eqref{eq_app:RDP} 
	with $q \!=\! 1$ and $\sigma\!=\!\sigma_d\Delta_d$ are, respectively, the SG sampling ratio and standard deviation parameters. 
\end{lemma}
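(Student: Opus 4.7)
The plan is to mirror the proof of Lemma \ref{lm:dp_primal}, the only structural change being that the dual update consumes the entire dataset $D$ rather than a mini-batch, so the Sampled Gaussian mechanism applies with sampling rate $q = 1$ (the case in which SG collapses to the ordinary Gaussian mechanism).

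First, I would identify the function that is actually being privatized. Writing $f(D) = \bigl|\bm{\mu}(D_P) - \bar{\bm{\mu}}^{C_d}(D_G)\bigr|$, Equation \eqref{eqn:private_dual_update} is simply $\bm{\lambda}_{k+1} = \bm{\lambda}_k + s_k\bigl(f(D) + \cN(0, \sigma_d^2 \Delta_d^2 \bm{I})\bigr)$, so the sensitive data enters only through the Gaussian-noised output of $f$; the addition of $\bm{\lambda}_k$ and multiplication by $s_k$ are post-processing of a private quantity and therefore preserve RDP. Second, I would verify that the sensitivity of $f$ is at most $\Delta_d$. The clipping operator inside $\bar{\mu}^{C_d}$ enforces $|h(z)/\max(1, |h(z)|/C_d)| \leq C_d$ pointwise, so the hypothesis $|h(z)| \leq C_d$ of Theorem \ref{thm:private_dual} is met by the clipped quantity and the sensitivity bound $\|f(D) - f(D')\| \leq \Delta_d$ from Equation \eqref{eq:sensitivity_Delta_d} applies to every adjacent pair $D \sim D'$.

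Finally, I would invoke Theorem \ref{thm:RDP_sampling}. Because the dual update does not subsample, every element of $D$ contributes deterministically to $f(D)$, and the SG sampling ratio is $q = 1$. Rescaling $f$ by $1/\Delta_d$ yields a unit-sensitivity mechanism perturbed by Gaussian noise with effective standard deviation $\sigma_d$, so Theorem \ref{thm:RDP_sampling} gives $(\alpha, \epsilon_d)$-RDP with $\epsilon_d$ satisfying Equations \eqref{eq_app:RDP}, matching the lemma statement under the same scaling convention used in Lemma \ref{lm:dp_primal}. No genuinely new technical obstacle arises; the only care point is the bookkeeping between the raw noise scale $\sigma_d \Delta_d$ appearing in Equation \eqref{eqn:private_dual_update} and the unit-sensitivity normalization assumed in Theorem \ref{thm:RDP_sampling}, which is resolved exactly as in the primal case.
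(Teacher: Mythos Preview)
Your proposal is correct and follows essentially the same approach as the paper's own proof, which is a one-line appeal to Definition \ref{definition7} and the RDP guarantee of the Gaussian mechanism. You are simply more explicit about the post-processing step, the role of clipping in enforcing the sensitivity hypothesis of Theorem \ref{thm:private_dual}, and the rescaling to unit sensitivity---all details the paper leaves implicit.
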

\begin{proof}
The result follows directly by Definition \ref{definition7} and Theorem \ref{thm:RDP} since the dual step of PF-LD uses Gaussian noise with parameter $\sigma_d$.
\end{proof}

PF-LD uses a predefined amount of noise (specified by parameters $\sigma_p$ and $\sigma_d$) at each iteration, so that each iteration  has roughly the same privacy loss, and uses the moment accountant \cite{abadi:16} to track detailed information of the cumulative privacy loss.
\begin{theorem}
	\label{thm:dp_PFLD}
	 PF-LD satisfies $(\alpha, \frac{Tn\epsilon_{p}}{|B|}+T\epsilon_d)$-RDP. 
\end{theorem}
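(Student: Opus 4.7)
The plan is to apply the composition property of R\'enyi Differential Privacy stated just after Theorem \ref{thm:RDP_sampling}, after correctly counting how many primal and dual mechanism invocations occur over the course of PF-LD.

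First, I would observe that Algorithm \ref{alg:alg1} runs for $T$ epochs, and within each epoch the inner loop performs one primal update per mini-batch $B \subseteq D$. Since the training set of size $n$ is partitioned into mini-batches of size $|B|$, there are $n/|B|$ primal updates per epoch, for a total of $\tfrac{Tn}{|B|}$ primal updates across the full run. By Lemma \ref{lm:dp_primal}, each of these primal updates is an instance of the Sampled Gaussian mechanism satisfying $(\alpha, \epsilon_p)$-RDP (the sampling ratio $q = |B|/n$ arises from the sub-sampling of mini-batches from $D$). Composing these $\tfrac{Tn}{|B|}$ mechanisms under RDP composition shows that the aggregate sequence of primal updates satisfies $(\alpha, \tfrac{Tn}{|B|}\,\epsilon_p)$-RDP.

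Next, I would handle the dual updates symmetrically: line \ref{line:5a} of Algorithm \ref{alg:alg1} is executed exactly once per epoch, so there are $T$ dual updates in total. By Lemma \ref{lm:dp_dual} each dual update satisfies $(\alpha, \epsilon_d)$-RDP (here the sampling ratio is $q=1$ because the full dataset $D$ is used), so composing the $T$ dual invocations yields $(\alpha, T\epsilon_d)$-RDP for their combined release.

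Finally, the full PF-LD algorithm is the (adaptive) composition of the primal and dual sub-sequences just analyzed, since at every iteration the randomness of later updates may depend on the outputs of earlier ones. Applying the RDP composition rule once more to the two aggregated mechanisms gives the stated privacy budget $(\alpha, \tfrac{Tn}{|B|}\epsilon_p + T\epsilon_d)$-RDP. There is no real obstacle here; the only subtlety worth flagging explicitly is adaptivity, but this is built into the standard RDP composition theorem from \cite{mironov2017renyi} invoked just before the theorem, so bookkeeping the counts of primal and dual calls is the only substantive step.
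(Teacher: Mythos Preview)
Your proposal is correct and follows essentially the same approach as the paper: the paper's proof is a one-liner invoking Lemmas \ref{lm:dp_primal} and \ref{lm:dp_dual} together with RDP composability from \cite{mironov2017renyi}, and you have simply made explicit the count of $\tfrac{Tn}{|B|}$ primal calls and $T$ dual calls that underlies that invocation. Your remark on adaptivity is a welcome clarification but does not depart from the paper's reasoning.
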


\begin{proof}
    The result follows from Lemmas \ref{lm:dp_primal} and \ref{lm:dp_dual} and by the composability of RDP \cite{mironov2017renyi}.
\end{proof}

The final privacy loss in the $(\epsilon, \delta)$-differential privacy model is obtained by observing that a mechanism satisfying $(\alpha, \epsilon)$-R\'{e}nyi differential privacy also satisfies $(\epsilon + \frac{\log \nicefrac{1}{\delta}}{\alpha-1}, \delta)$-differential privacy, for any $0\!<\! \delta \!<\! 1$ \cite{mironov2017renyi}.

\subsection{Bias-Variance Analysis}
\label{sec:bias-variance}

A key aspect of PF-LD is the choice of values $C_d$ and $C_p$ used 
to bound the functions $h(z)$, and their gradients, 
respectively, for every sample $z \in D_{G_i}$ and $i \in \cI$. 
The choice of these clipping terms affects the global sensitivity of 
the functions of interest, which, in turn, impacts the amount of 
noise used by the differentially private mechanisms. As a result, 
the clipping terms are associated to a \emph{bias-variance} 
trade-off: 
Small values can discard  significant amounts of \emph{constraints 
information}, thus introduce bias; large values retain more 
information but force the 
differential privacy mechanism to introduce larger noise, inducing 
more variance. 
It is important to recall that, at every iteration, PF-LD induces 
some privacy loss, thus, for a fixed privacy budget, the use of 
small values cannot be compensated by longer runs. 
This section formulates a bias-variance analysis that is helpful to 
select clipping values for gradient norms under the SG mechanism. 
 
\def\true{\textsl{true}}
\def\priv{\textsl{priv}}
Let $\bm{G} \!=\! \nabla_{\theta} \bm{\lambda}^{\top}\! 
		|\bm{\mu}(B_P) - \bm{\mu}(B_G)|$ 
be the gradient computed over a minibatch $B$ during the primal update of F-LD (Algorithm \ref{alg:alg1} line \ref{line:4a}) and 
$\tilde{\bm{G}} \!=\! 
  	\bm{\lambda}^\top\!
  	| \nabla_\theta \bm{\mu}(B_P) -
  	\bar{\nabla}^{C_p}_\theta \bm{\mu}(B_G) |
	+ \cN(0, \sigma_p^2\Delta^2_p \bm{I})
$
be its privacy-preserving counterpart, as computed by PF-LD  (Equation \eqref{eqn:private_primal_update}).
\begin{theorem} 
\label{thm:Cp_bound}
The expected error between the real and noisy gradients, $\bm{G}$ and $\tilde{\bm{G}}$, incurred during the primal step can be upper bounded as: 
\begin{align}
\label{eq:grad_error} 
     \EE\left[ \big\|\bm{G} - \tilde{\bm{G}}\big\| \right] 
     \leq & \frac{2 \sqrt{S_{\tilde{\bm{G}}}} \sigma_p \lambda^{\max}C_p}
     {\min_{i \in \cI} |B_{G_i}|-1} + \sum_{i \in \cI} \lambda_i \hat{\EE}_{z\sim B_{G_i}} 
      \left[ \max\big(0, \left\|\nabla_\theta h(z) \right\|-C_p\big) 
      \right],\notag
\end{align}
where $S_{\tilde{\bm{G}}}$ is the shape (i.e., the number of entries) of 
$\tilde{\bm{G}}$.
\end{theorem}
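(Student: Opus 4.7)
The plan is to decompose the error into a clipping bias term and a Gaussian noise term via the triangle inequality, and then bound each piece separately. Introduce the intermediate quantity $\bm{G}' = \bm{\lambda}^\top \bigl| \nabla_\theta \bm{\mu}(B_P) - \bar{\nabla}^{C_p}_\theta \bm{\mu}(B_G) \bigr|$, i.e., the noiseless but clipped version of $\tilde{\bm{G}}$, so that $\tilde{\bm{G}} = \bm{G}' + Z$ with $Z \sim \cN(0, \sigma_p^2 \Delta_p^2 \bm{I})$. Then the triangle inequality in expectation gives
\[
\EE\bigl[\|\bm{G} - \tilde{\bm{G}}\|\bigr] \;\leq\; \EE\bigl[\|\bm{G} - \bm{G}'\|\bigr] + \EE[\|Z\|].
\]

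For the Gaussian term, I would use $\EE[\|Z\|] \leq \sqrt{\EE[\|Z\|^2]}$ (Jensen), which evaluates to $\sqrt{S_{\tilde{\bm{G}}}}\,\sigma_p \Delta_p$, and then invoke Theorem~\ref{thm:private_primal} to substitute $\Delta_p \leq \tfrac{2 C_p \lambda^{\max}}{\min_{i\in\cI}|B_{G_i}|-1}$. This directly yields the first term on the right-hand side of the theorem.

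For the clipping bias $\|\bm{G} - \bm{G}'\|$, linearity of the gradient through the empirical mean $\mu(B_{G_i}) = \hat{\EE}_{z \sim B_{G_i}}[h(z)]$ allows writing the only difference between $\bm{G}$ and $\bm{G}'$ as the replacement of $\nabla_\theta h(z)$ by its clipped version $\bar{\nabla}^{C_p}_\theta h(z)$ inside each group-term expectation. Using the reverse triangle inequality $\bigl||a-b|-|a-c|\bigr| \leq |b-c|$ element-wise on the Lagrangian and summing across $i \in \cI$, the bound reduces to
\[
\|\bm{G} - \bm{G}'\| \;\leq\; \sum_{i \in \cI} \lambda_i \, \hat{\EE}_{z \sim B_{G_i}} \bigl\| \nabla_\theta h(z) - \bar{\nabla}^{C_p}_\theta h(z) \bigr\|,
\]
after pulling the expectation outside the norm via Jensen. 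The final step is the pointwise identity $\bigl\| \nabla_\theta h(z) - \bar{\nabla}^{C_p}_\theta h(z) \bigr\| = \max\bigl(0, \|\nabla_\theta h(z)\| - C_p\bigr)$, which follows immediately from the definition of $\bar{\nabla}^{C_p}_\theta$: when $\|\nabla_\theta h(z)\| \leq C_p$ the clipping is inactive and the difference vanishes, while otherwise the clipped gradient has norm exactly $C_p$ and is parallel to the original.

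The main obstacle I anticipate is being careful with the element-wise absolute value operator in the Lagrangian: the reverse triangle inequality must be applied coordinate-by-coordinate within the $\bm{\lambda}^\top|\cdot|$ expression before the vector norm is taken, and then a triangle inequality across $i \in \cI$ is needed to produce the clean sum. Everything else — Jensen's inequalities for both the Gaussian $\EE[\|Z\|]$ bound and for pulling $\hat{\EE}_{z}$ outside of $\|\cdot\|$, and the substitution of $\Delta_p$ from Theorem~\ref{thm:private_primal} — is a routine combination of the earlier results.
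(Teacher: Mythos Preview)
Your proposal is correct and takes essentially the same approach as the paper: a bias--variance decomposition via the triangle inequality, Jensen's inequality for the Gaussian variance term, Jensen again to push the empirical expectation outside the norm in the bias term, the pointwise clipping identity, and finally the substitution of $\Delta_p$ from Theorem~\ref{thm:private_primal}. Your intermediate quantity $\bm{G}'$ is exactly $\EE[\tilde{\bm{G}}]$ (since the Gaussian noise has mean zero), so your split $\EE[\|\bm{G}-\bm{G}'\|]+\EE[\|Z\|]$ coincides with the paper's $\|\EE[\tilde{\bm{G}}]-\bm{G}\|+\EE[\|\tilde{\bm{G}}-\EE[\tilde{\bm{G}}]\|]$; your explicit invocation of the reverse triangle inequality for the elementwise $|\cdot|$ is a point the paper glosses over.
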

\begin{proof}
    By triangular inequality,
\[
      \EE\left[ \|\bm{G} - \tilde{\bm{G}}\| \right] \leq  
      \EE\left[ 
      \|\tilde{\bm{G}} - \EE[\tilde{\bm{G}}]\| 
      \right]
      +  
      \|\EE[\tilde{\bm{G}}] - \bm{G}\|
\]
Where the first term of the right hand side inequality represents the variance term and the second is the bias term. 
The squared variance term can be bounded as follows:
\begin{align}
    \left( 
    \EE\left[ \|\tilde{\bm{G}} - \EE[\tilde{\bm{G}}]\| \right]
    \right)^2 
    \leq 
    \EE\left[\|\tilde{\bm{G}} -  \EE[\tilde{\bm{G}}]\|^2 \right] 
    = S_{\tilde{\bm{G}}} \sigma^2_p \Delta^2_p.
    \label{a:eqn:variance_bound}
\end{align}
 The above follows from Jensen inequality: For any convex function $f$, $f(\EE[X]) \leq  \EE[f(X)]$, and by setting $f=(\cdot)^2$ and 
 $X = \| \tilde{\bm{G}} - \EE[\tilde{\bm{G}}] \|$, 
 and from noticing that the term 
 $\EE\left[\|\tilde{\bm{G}} -  \EE[\tilde{\bm{G}}]\|^2 \right]$
 correspond to the variance of the Gaussian noise added to 
 the each of the $S_{\tilde{\bm{G}}}$ model gradients (see Equation \eqref{eqn:private_primal_update}).  
 Therefore $ \EE \left[ \| \tilde{\bm{G}} - \EE [\tilde{\bm{G}}] \| \right] 
            \leq   \sqrt{S_{\tilde{\bm{G}}}} \sigma_p \Delta_p $. 
The bias term can be bound as follows:
\begin{align}
    \left\| \EE[\tilde{\bm{G}}]  - \bm{G} \right\|  
    &= \left\| \sum_{i \in \cI} \lambda_i 
        \left(   \nabla_\theta \big| \mu(B_{P_i}) - \bar{\nabla}^{C_p}_\theta 
    \mu (B_{G_i})\big| -  \nabla_\theta \big| \mu(B_{P_i}) - \nabla_\theta 
    \mu(B_{G_i}) \big|\right) \right\|  \nonumber \\
    &= \left\| \sum_{i \in \cI} \lambda_i 
        \left( \nabla_\theta \mu(B_{G_i})  
            - \bar{\nabla}^{C_p}_\theta \mu(B_{G_i})  
            |\right) \right\| \nonumber \\
    &=   \sum_{i \in \cI} \lambda_i \left\|
        \hat{\EE}_{z \sim B_{G_i}} \left[\nabla_\theta h(z)\right]
      - \hat{\EE}_{z \sim B_{G_i}} \left[\bar{\nabla}_\theta^{C_p} h(z)\right] 
        \right\|  \nonumber \\
    &\leq 
    \sum_{i \in \cI} \lambda_i \hat{\EE}_{z \sim B_{G_i}} 
    \left[ \left\| 
    \nabla_\theta h(z) - \bar{\nabla}_\theta^{C_p}h(z)
    \right\| \right] 
    \tag{by Jensen inequality} \\
    & =  \sum_{i \in \cI} \lambda_i \hat{\EE}_{z \sim B_{G_i}} \left[  \max\big( 0, \left\|\nabla_\theta h(z) \right\|-C_p \big)  \right],
    \label{a:eqn:bias_bound}
\end{align}
where the last equality is due to gradient clipping.

Combining Equation \eqref{a:eqn:variance_bound} with Equation \eqref{a:eqn:bias_bound} and replacing the term $\Delta_p$ with $\frac{2 C_p \lambda^{\max}}{ \min_{i \in \cI} |B_{G_i}| -1}$ (by Theorem \ref{thm:private_primal}), gives the sought upper bound for the expected error of the private gradients.
\end{proof}

Note that the bound above is a convex function of $C_p$. Its unique minimizer 
satisfies:
{
\begin{equation*}
    \frac{2\sqrt{S_{\tilde{\bm{G}}}}\sigma_p \lambda^{\max}}
    	 {\min_{i \in \cI} |B_{G_i}|-1} 
    \!=\!  
    \sum_{i\in\cI} \!\lambda_i \hat{\EE}_{z \sim B_{G_i}}\! 
    \Big[ \mathbbm{1} \big[ \| \nabla_\theta h(z)\| \geq C_p\big]\Big].
\end{equation*}
}
While beyond the scope of the this work, the above illustrates that a procedure to find the optimal $C_p$ privately can be constructed effectively. 

Next, the paper shows how to bound the expected error incurred in using the noisy constraint violations during the dual step. 
Let  $V_i \!=\! | \mu(D_{P_i}) - \mu(D_{G_i}) |$ be the value corresponding to the i-th constraint violation ($i \in \cI$), and  
$\tilde{V}_i \!=\! |\mu(D_{P_i}) - \bar{\mu}^{C_d}(D_{G_i})| + \cN(0, \sigma_d^2 \Delta_d^2) $ be its privacy-preserving version (see Equation \eqref{eqn:private_dual_update}). 
\begin{theorem} 
\label{thm:Cd_bound}
The expected absolute error between the real and noisy constraint violations $V_i$ and $\tilde{V}_i$, for $i \!\in\! \cI$, is bounded by the following 
{
\begin{align}
\label{eq:constraint_error} 
 \EE \left[ |V_i- \tilde{V}_i | \right] &\leq
         \frac{\sqrt{2} C_d \, \sigma_d}{\min_{i \in \cI} 
         |D_{G_i}|-1 } 
          + \hat{\EE}_{z \sim D_{G_i}}\left[ \max(0, |h(z)| - C_d) \right]. \notag 
\end{align}
}
\end{theorem}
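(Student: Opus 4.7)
The plan is to mirror the structure used in the proof of Theorem \ref{thm:Cp_bound}, decomposing the expected absolute error between $V_i$ and $\tilde{V}_i$ into a variance-type term arising from the Gaussian noise and a bias term arising from clipping $h(z)$. Since $V_i$ is deterministic and the only randomness in $\tilde{V}_i$ is the additive $\cN(0,\sigma_d^2\Delta_d^2)$ draw, the triangle inequality yields
\begin{equation*}
\EE\bigl[|V_i - \tilde{V}_i|\bigr] \;\leq\; \bigl|V_i - \EE[\tilde{V}_i]\bigr| \,+\, \EE\bigl[\,|\tilde{V}_i - \EE[\tilde{V}_i]|\,\bigr],
\end{equation*}
and the two summands will be treated independently.

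First, for the variance-type term, I would observe that $\EE[\tilde{V}_i] = |\mu(D_{P_i}) - \bar{\mu}^{C_d}(D_{G_i})|$, so $\tilde{V}_i - \EE[\tilde{V}_i]$ is exactly the scalar Gaussian noise with variance $\sigma_d^2\Delta_d^2$. Applying Jensen's inequality to $x \mapsto x^2$ (exactly as done in the proof of Theorem \ref{thm:Cp_bound}) gives $\EE[|\tilde{V}_i - \EE[\tilde{V}_i]|] \leq \sqrt{\EE[(\tilde{V}_i - \EE[\tilde{V}_i])^2]} = \sigma_d\Delta_d$. Substituting the sensitivity bound $\Delta_d \leq \sqrt{2}\,C_d/(\min_{i\in\cI}|D_{G_i}|-1)$ from Theorem \ref{thm:private_dual} recovers the first term of the claimed bound.

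Second, for the bias term $|V_i - \EE[\tilde{V}_i]|$, I would apply the reverse triangle inequality $\bigl||a|-|b|\bigr| \leq |a-b|$ with $a = \mu(D_{P_i}) - \mu(D_{G_i})$ and $b = \mu(D_{P_i}) - \bar{\mu}^{C_d}(D_{G_i})$ to cancel the $\mu(D_{P_i})$ contribution and obtain $|V_i - \EE[\tilde{V}_i]| \leq |\mu(D_{G_i}) - \bar{\mu}^{C_d}(D_{G_i})|$. Rewriting both quantities as empirical averages over $z \sim D_{G_i}$ and pushing the outer absolute value inside via Jensen's inequality produces $\hat{\EE}_{z \sim D_{G_i}}\bigl[|h(z) - \bar{h}^{C_d}(z)|\bigr]$, with $\bar{h}^{C_d}(z) = h(z)/\max(1, |h(z)|/C_d)$. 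A short case split finishes the argument: when $|h(z)| \leq C_d$ the clipped value equals $h(z)$ and the integrand is $0$, while when $|h(z)| > C_d$ one has $\bar{h}^{C_d}(z) = C_d\,\mathrm{sign}(h(z))$ so the integrand equals $|h(z)| - C_d$; together these give $|h(z) - \bar{h}^{C_d}(z)| = \max(0, |h(z)| - C_d)$, which is the second term.

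I do not anticipate a serious obstacle: the argument is structurally parallel to Theorem \ref{thm:Cp_bound}, and every ingredient, namely Jensen's inequality, the sensitivity bound of Theorem \ref{thm:private_dual}, and the pointwise clipping identity, already appears in the earlier proofs. The one step deserving care is the reverse triangle inequality, which is what allows the outer absolute values around $V_i$ and $\tilde{V}_i$ to be discarded cleanly before Jensen is invoked on the clipping residual.
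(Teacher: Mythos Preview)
Your proposal is correct and follows essentially the same decomposition and steps as the paper's proof. In fact, your treatment of the bias term is slightly more careful: the paper writes $|V_i - \EE[\tilde{V}_i]| = \bigl|\hat{\EE}_{z\sim D_{G_i}}[h(z)] - \hat{\EE}_{z\sim D_{G_i}}[\bar{h}^{C_d}(z)]\bigr| = \hat{\EE}_{z\sim D_{G_i}}[\max(0,|h(z)|-C_d)]$ as a chain of equalities, whereas you correctly identify that the first step requires the reverse triangle inequality and the second requires Jensen, both yielding only $\leq$.
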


\noindent The proof uses similar arguments as those in the proof of 
Theorem \ref{thm:Cp_bound} 

\begin{proof}
 By triangular  inequality,
\begin{align*}
      \EE\left[ |V_i - \tilde{V}_i| \right] 
      \leq 
        \left|V_i - \EE[\tilde{V}_i] \right| 
      + \EE\left[ 
        \left| \tilde{V}_i - \EE \left[ \tilde{V}_i \right] \right| 
      \right]
\end{align*}
Where the first term of the right hand side inequality represents the 
variance term and the second is the bias term. 

Using the Jensen inequality, the squared variance term can be bound 
as follows:
$$
    \left( \EE\left[ | \tilde{V}_i - \EE[\tilde{V}_i ] |\right]\right)^2  
    \leq \EE\left[ \left(\tilde{V}_i - \EE[\tilde{V}_i] \right)^2 \right]  
   = \sigma^2_d \Delta^2_d, 
$$
where the last equality holds because the noisy constraint violations
are perturbed with Gaussian random noise with a standard deviation  
$\sigma_d \Delta_d$. 
By replacing 
$\Delta_d = \frac{\sqrt{2}C_d}{\min_{i \in \cI} |D_{G_i}|}$ 
by Theorem \ref{thm:private_primal}, 
the variance term now can thus be upper bounded by:
\begin{equation}
\label{a:eq:variance_term_bound}
     \EE \left[ | \tilde{V}_i - \EE[\tilde{V}_i ]  | \right] 
     \leq  \frac{\sqrt{2}  C_d \, \sigma_d}{  \min_{i \in \cI } |D_{G_i} | -1 }.
\end{equation}

Next, focusing on the bias term, note that the true constraint $V_i$ is 
represented  as: 
\begin{align}
\label{a:eq:vi}
    V_i &= |\mu(D_{P_i}) - \mu(D_{G_i})|
        = \left|
            \hat{\EE}_{z \sim D_{P_i}}[h(z)] 
          - \hat{\EE}_{z \sim D_{G_i}}[h(z)]  
          \right|,
\end{align}
and the expectation of its private version, $\tilde{V}_i$, can be 
written as
\begin{align}
\label{a:eq:E_vi}
    \EE\left[\tilde{V}_i\right] &= 
    \left| \hat{\EE}_{z \sim D_{P_i}} \left[h(z))  \right] 
         - \hat{\EE}_{z \sim D_{G_i}} \left[ \frac{h(z)}
        {\max(1, \frac{|h(z)|}{ C_d})} \right]
    \right|
\end{align}
where the above follows from the definition of the constraint on the 
group term $\bar{\mu}^{C_d}(D_{G_i}$ (see main paper, equation after 
Equation \eqref{eqn:private_dual_update}).
Combining Equations \eqref{a:eq:vi} and \eqref{a:eq:E_vi}, the bias 
term can be rewritten as:
\begin{align}
    \left| V_i - \EE[\tilde{V}_i] \right| 
 &= \left| \hat{\EE}_{z \sim D_{G_i}} \left[ h(z)  \right] - \hat{\EE}_{z \sim D_{G_i}} \left[ \frac{h(z) }
    {\max(1, \frac{|h(z)|}{ C_d})} \ \right] \right| \notag\\
 & = \EE_{z \sim D_{G_i}}\left[ \max(0, |h(z))| - C_d) | \right].
    \label{a:eqn:bias_term_bound}
\end{align} 

Combining Equation \eqref{a:eqn:bias_term_bound} and 
\eqref{a:eq:variance_term_bound} provides the sought bound.
\end{proof}

\section{PF-LD: Handing Missing Values}
\label{sec:missing_values}

This section presents a simple, yet important, extension to the PF-LD model where only a subset of the individuals reports the value of the sensitive attributes. 
Recall that to build a fair model it is necessary to collect sensitive information about users. While the privacy risks of this data collection can be mitigated with the techniques proposed in this paper, a practical scenario may involve the data curator to give the user a choice to whether release their sensitive information or not. 
This section shows that it is straightforward to adapt the proposed model to this scenario.

The paper 
considers the case where only a fraction $r \leq 1$ of the training samples presents the sensitive attribute $A$. 
To operate under this scenario, it is sufficient to modify the sensitivity $\Delta_p$ of the gradients of the constraint violations (Equation \eqref{eq:sensitivity_Delta_p}) and the sensitivity $\Delta_d$ of the constraint violations (Equation \eqref{eq:sensitivity_Delta_d})
to, respectively, $\Delta_p' = \nicefrac{\Delta_p}{r}$ 
and $\Delta_d' = \nicefrac{\Delta_d}{r}$. 
The argument follows from the observation that reducing the number of training samples having sensitive data also reduces the components $\min_{i\in \cI}|B_{G_i}|$ (Equation \eqref{eq:sensitivity_Delta_p}) 
and $\min_{i \in \cI}|D_{G_i}|$ (Equation \eqref{eq:sensitivity_Delta_d}) by a factor of $r$. 
Consequentially, while on one side, using less information may affect the model ability to satisfy the fairness constraint, on the other, it also require smaller amounts of noise to guarantee the same level of privacy. This trade-off is subject of analytical study, presented next.

\newcommand{\algAf}{\textit{Agarwal et.~al}}
\newcommand{\algMf}{\textit{Mozannar et.~al}}
\newcommand{\algZf}{\textit {Zafar et.~al}}
\newcommand{\algA}{\texttt{A}}
\newcommand{\algM}{\texttt{M}}
\newcommand{\algZ}{\texttt{Z}}
\section{Experimental Analysis}
\label{sec:experiment}

\paragraph{Datasets, Models, and Metrics} This section studies the behavior of the proposed algorithm on several datasets, including \emph{Income}. \emph{Bank}, and \emph{Compass} \cite{zafar2017fairnes} datasets, described as follows.

\begin{itemize}

    \item \textit{Income}:
    The task is to predict if an user has annual income  
    above \$50,000. The protected attribute associated with the group 
    membership is the \emph{gender}, and there are two groups: 
    \emph{male} and \emph{female}.

    \item \textit{Compas}: 
    The task is to predict if a defendant will re-offend in the next 
    two years (2 years recidivism). The protected attribute associated 
    with the group membership is \emph{gender}.

    \item \textit{Bank}:
    The task is to detect client subscriptions to the term deposit. 
    The protected attribute associated with the group membership is 
    \emph{age} and the experiments consider the following three different group membership sizes: 
    \begin{itemize}
        \item \textit{Bank}, in which the number of protected groups 
        $|\cA| = 2$,  with the majority of the people of age ranging from 25 to 60 and 
        minority, being under 25 or over 60 years old. 
        \item $\mbox{M-Bank}^3$, in which the number of protected 
        groups is $|\cA| = 3$, indicating people with age ranging from 25 to 40, 
        41 to 60 and the remaining ages.
        \item $\mbox{M-Bank}^5$, in which the number of protected groups 
        is $|\cA| = 5$, indicating people with age ranging from 25 to 33, 
        34 to 40, 41 to 48, 49 to 60, and the remaining ages.
    \end{itemize}
\end{itemize}
All datasets above were pre-processed according to 
\cite{zafar2015fairness, zafar2017fairnes} and are so that all 
features have zero mean and unit variance.
A summary of the datasets adopted and their features is provided in 
Table~\ref{tab:dataset_summary}.

\begin{table*}[!tb]
\centering
\resizebox{1.0\textwidth}{!}{%
\begin{tabular}{r*{6}{l}}
\toprule
  Data  &  \# Samples & \# Features & $\cY$ & $\cA$&   $|\cA|$& Protected group sizes (\%) \\
\midrule
Bank      & 11,162              & 15   &  Deposit            & Age     &2 & 8\% ; 92\%        \\
Income   & 45,222               & 50    & $\geq$ 50K         & Gender  &2 & 32\% ; 68\%      \\
Compas    & 6,150               & 10    & 2 years recidivism & Gender  &2 & 20\% ; 80\%       \\ 
$\mbox{M-Bank}^3$ &   11,162    & 15    & Deposit            & Age     &3 & 8\% ; 46\% ; 46\% \\
$\mbox{M-Bank}^5$ &   11,162    & 15    & Deposit            & Age     &5 & 8\% ; 23\% ; 23\% ; 23\% ; 23\%\\
\bottomrule
\end{tabular}
}
\caption{Dataset summary}
\label{tab:dataset_summary}
\end{table*}

The experiments consider a baseline classifier (\textit{CLF}), implemented as a neural network with two hidden layers, that maximize accuracy only, without considerations for fairness or privacy, and compare the proposed \textit{PF-LD} model against the following state-of-the-art algorithms: 
    \algZ, it implements a fair logistic regression models that achieves group fairness. These models were presented in \cite{zafar2015fairness} for demographic parity and in \cite{zafar2017fairnes} for accuracy parity and equalized odds. 
    \algA, it implements the fair logistic regression model based on reduction approaches. introduced in \cite{pmlr-v80-agarwal18a}. 
    Note that the models above preserves group fairness but \emph{do not guarantee privacy}.
    They are used to highlight the effectiveness of the proposed approach based on the Lagrangian dual to ensure fairness. 
    Finally, 
    \algM, proposed in \cite{mozannar2020fair}, the model most related to the proposed work, ensures both fairness and $\epsilon$-differential privacy with respect to the sensitive attributes. 
    The algorithm uses the fair model \algA{} on perturbed noisy data generated according to a randomized response mechanism \cite{NIPS2014_5392}. 
    While these models where studied in the context of equalized odds, this work extends them to satisfy all fairness definitions 
    considered in this work.
    Compared to the proposed model \algM{} has the disadvantage of introducing large amounts of noise to the sensitive attribute, especially when the domain of these attributes is large and/or when $A$ is  high-dimensional.
    \footnote{
    The authors note there is an additional work which addresses learning a fair and private classifier \cite{Jagielski:20}. While an important contribution, it has been shown to induce significant privacy losses (see Figure~1 of \cite{Jagielski:20}). Model \algM{} was shown to outperform these  algorithms presented in \cite{Jagielski:20} in terms of classification error bounds. Therefore, this paper adopts \algM, as the state-of-the-art.}

    The experiments analyze the accuracy, fairness violations, and privacy losses (when applicable) of the models above.
    The fairness violations are measured as the maximal difference in fairness constraint violations between any two protected groups.
    The privacy losses are set to $\epsilon=1.0$ and $\delta=10^{-5}$, unless otherwise specified. 
    PF-LD uses clipping bound values $C_p = 10.0$ and $C_d = 5.0$. 


\begin{table*}[tbh]
\centering
\resizebox{1.0\linewidth}{!}{%
\begin{tabular}{rrr|rrrrr}
 \toprule
 & & &  CLF & \algZ & \algA & \algM & DF-LP \\
 \midrule
 \multirow{6}*{\textbf{Bank}} & \multirow{2}*{Accuracy Parity}
                        & acc  &  0.8 (0.007) & 0.801 (0.005) & 0.808  (0.004) & 0.799  (0.007) &\textbf{ 0.812  (0.004)} \\
             &         & fv  &  0.041 (0.021) & 0.025 (0.015) & 0.006  (0.004) & 0.036  (0.019) & \textbf{0.021  (0.009)}  \\
& \multirow{2}*{Demographic Parity}
                       & acc & 0.8 (0.007) & 0.78 (0.009) & 0.772  (0.01) & 0.784  (0.008) & \textbf{0.793  (0.013) }   \\
           &           & fv & 0.299 (0.034) & 0.03 (0.021) & 0.03  (0.017) & 0.131  (0.048) &  \textbf{0.126  (0.027)} \\  
& \multirow{2}*{Equalized odds}
                        & acc  & 0.8 (0.007) & 0.791 (0.006) & 0.764  (0.007) & 0.796  (0.006) & \textbf{0.808  (0.004)} \\
          &           & fv   & 0.239 (0.046) & 0.113 (0.054) & 0.125  (0.055) & 0.252  (0.103) & \textbf{0.188  (0.076) }   \\
\hline
\multirow{6}*{\textbf{Income}} & \multirow{2}*{Accuracy Parity}
                        & acc  & 0.848 (0.003) & 0.848 (0.004) & 0.834  (0.005) & \textbf{0.842  (0.004)} & 0.782  (0.008) \\
             &         & fv  & 0.114 (0.005) & 0.114 (0.007) & 0.093  (0.006) & 0.11  (0.006) & \textbf{0.061  (0.019)}  \\
& \multirow{2}*{Demographic Parity}
                       & acc    & 0.848 (0.003) & 0.827 (0.019) & 0.789  (0.031) & 0.792  (0.035) & \textbf{0.799  (0.002)} \\
           &           & fv  & 0.181 (0.007) & 0.039 (0.012) & 0.044  (0.053) & 0.063  (0.054) & \textbf{ 0.019  (0.009)} \\  
& \multirow{2}*{Equalized odds}
                        & acc  & 0.848 (0.003) & 0.842 (0.003) & 0.84  (0.003) & 0.837  (0.002) & \textbf{0.841  (0.003)}  \\
          &           & fv    & 0.094 (0.013) & 0.102 (0.022) & 0.042  (0.005) & 0.064  (0.014) & \textbf{0.044  (0.006)}\\
\hline
\multirow{6}*{\bf Compas} & \multirow{2}*{Accuracy Parity}
                        & acc & 0.683 (0.011) & 0.677 (0.013) & 0.675  (0.013) & 0.661  (0.019) & \textbf{0.671  (0.015)}  \\
             &         & fv  & 0.024 (0.014) & 0.016 (0.008) & 0.033  (0.022) & \textbf{0.016  (0.012)} & 0.031  (0.029)  \\
& \multirow{2}*{Demographic Parity}
                       & acc  & 0.683 (0.011) & 0.634 (0.017) & 0.643  (0.014) & 0.664  (0.014) &\textbf{ 0.667  (0.015) }   \\
           &           & fv  & 0.121 (0.022) & 0.023 (0.012) & 0.048  (0.023) & \textbf{0.092  (0.032)} & 0.098  (0.037) \\  
& \multirow{2}*{Equalized odds}
                        & acc   & 0.683 (0.011) & 0.647 (0.011) & 0.662  (0.019) & 0.67  (0.013) & \textbf{0.677  (0.015)}  \\
          &           & fv       & 0.118 (0.038) & 0.049 (0.033) & 0.097  (0.044) & 0.139  (0.036) & \textbf{ 0.115  (0.04)}  \\
\hline
\multirow{6}*{\bf $\mbox{M-Bank}^3$} & \multirow{2}*{Accuracy Parity}
                        & acc & 0.807 (0.008) & NA & 0.813 (0.005) & 0.817  (0.009) & \textbf{0.827  (0.007)}  \\
             &         & fv & 0.064 (0.021) & NA & 0.037  (0.012) & 0.04  (0.032) & \textbf{0.033  (0.007) } \\
& \multirow{2}*{Demographic Parity}
                       & acc  & 0.807 (0.008) & NA & 0.771 (0.008)  & 0.797  (0.006) & \textbf{0.811  (0.008)}    \\
           &           & fv  & 0.317 (0.035) & NA & 0.067  (0.026) & 0.261  (0.085) & \textbf{0.218  (0.033)} \\  
& \multirow{2}*{Equalized odds}
                        & acc   & 0.807 (0.008) & NA & 0.783 (0.004) & 0.808  (0.01) & \textbf{0.825  (0.006)} \\
          &           & fv      & 0.244 (0.07) & NA & 0.199  (0.054) & 0.348  (0.133) & \textbf{0.279  (0.082)}  \\
\hline
\multirow{6}*{\bf $\mbox{M-Bank}^5$} & \multirow{2}*{Accuracy Parity}
                        & acc & 0.807 (0.008) & NA & 0.809 (0.007)  & 0.814  (0.008) & \textbf{0.827  (0.007)} \\
             &         & fv & 0.078 (0.013) & NA & 0.055  (0.018) & 0.059  (0.009) & \textbf{0.046  (0.008)}  \\
& \multirow{2}*{Demographic Parity}
                       & acc  & 0.807 (0.008) & NA & 0.757 (0.015) & 0.774  (0.011) & \textbf{ 0.805  (0.009) }   \\
           &           & fv & 0.341 (0.034) & NA & 0.091  (0.028) & 0.336  (0.047) & \textbf{0.193  (0.035) }\\  
& \multirow{2}*{Equalized odds}
                        & acc   & 0.807 (0.008) & NA & 0.785 (0.011)  & 0.8  (0.008) & \textbf{0.823  (0.007)} \\
          &           & fv      & 0.271 (0.068) & NA & 0.214  (0.045) & 0.4  (0.117) &  \textbf{0.297  (0.084) }\\
\bottomrule
\end{tabular}
}
\caption{Accuracy (acc) and fairness violation (fv) among all models and datasets. Models \algM{} and \emph{DF-LP} uses privacy budget $\epsilon = 1.0$.  
The results report the average value of a five fold cross validation and its associated standard deviation (in parenthesis).
Best accuracy or fairness violation scores between \algM{} and \emph{DF-LP} are highlighted in bold.}
\label{tab:main_tab}
\end{table*}

\smallskip\noindent\textbf{Architectures and Parameter Tuning} 
Models \emph{PF-LD}, \algM{}, and  \algA{} all use a ReLU neural 
network with two hidden layers. Thus these model have the same 
number of parameters to optimize.

The authors have not performed a systematic model tuning for the proposed
PF-LD. Thus, they suspect that the proposed model can achieve a higher 
accuracy, fairness, and privacy tradeoff that that reported here, if 
the hyperparameters are tuned for specific benchmark and fairness 
definitions. However, this is not the focus of this work.

\subsection{Accuracy and Fairness}
\label{subsec:Accuracy_and_Fairness}
This section analyzes the impact on accuracy and fairness of the privacy-preserving models introduced above. The main results are summarized in Table \ref{tab:main_tab}. Notice that the algorithm associated with model \algZ{} only 
handles binary group data, and thus the table entries associated with  the multi-group experiments are left blank.  For each dataset and fairness definition, the table reports the average accuracy (acc) and fairness violation (fv) results and standard deviations (in parenthesis) of a five fold cross validation.
The results highlight in bold the best outcome, in terms of accuracy  or fairness violation, attained when comparing models \algM{} and \emph{DF-LP}. 

The table illustrates a clear trend: The proposed \emph{DF-LP} is 
able to attain accuracy and fairness violation scores that are comparable with those attained by the models that do not consider privacy, and  achieves a better accuracy-fairness tradeoff, when compared with \algM{}.  Indeed, \emph{DF-LP} achieves a better accuracy or fairness scores in 90\%(27/30) of cases.  Additionally, while the fairness violation scores achieved by model \algM{}  are comparable with (or worse than) those attained by the basic classifier  on the multi protected attribute datasets, \emph{DF-LP} is able to consistently reduce the model disparate effects.   This is due to that model \algM{} replies on an input perturbation  mechanism to guarantee privacy. However, the amount of noise required  to guarantee privacy increases with the size of the protected attribute domain.

\begin{figure}[htp]
\begin{subfigure}{0.49\columnwidth}
\centering
\includegraphics[width=\textwidth]{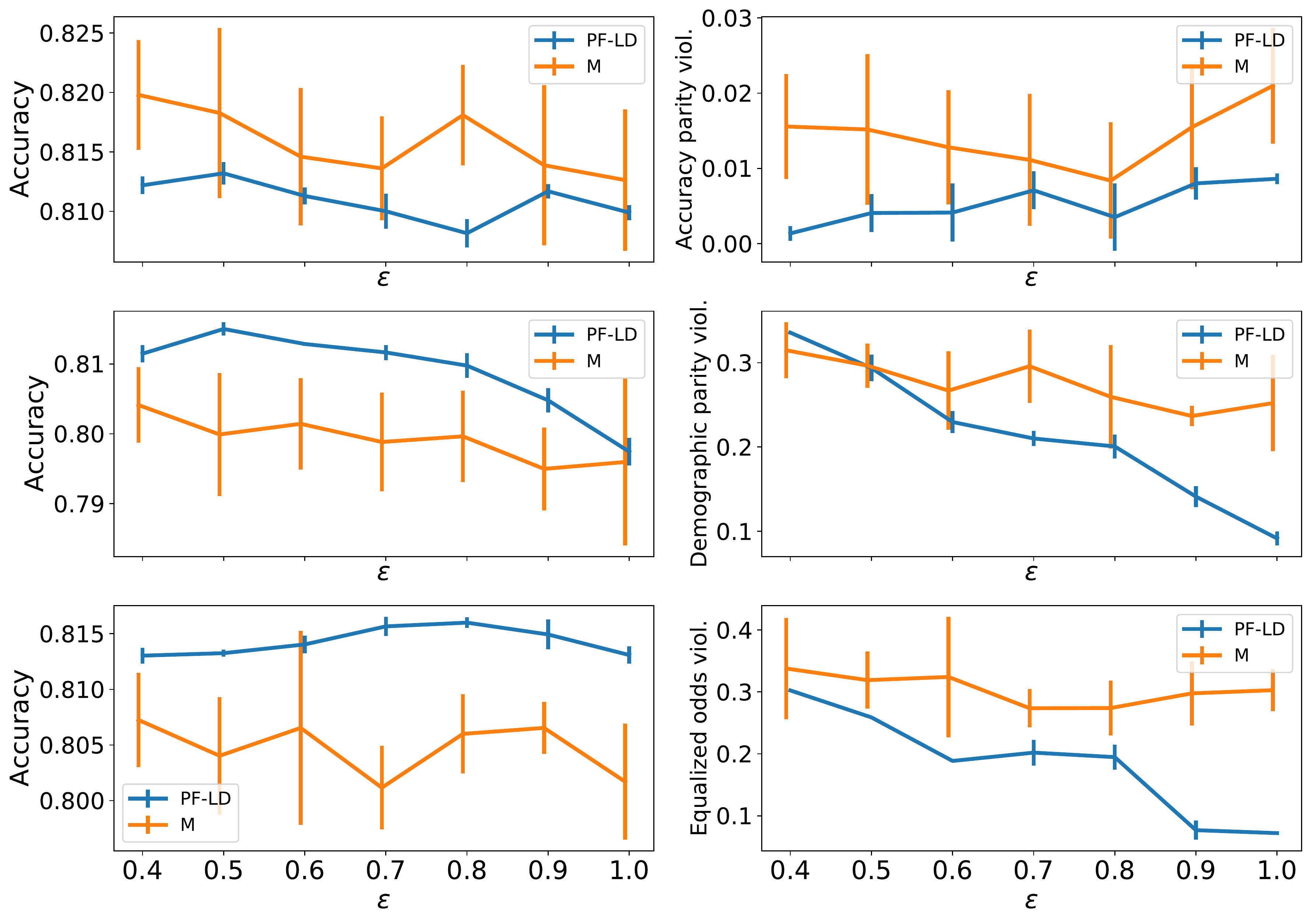}
\caption{Bank}
\label{fig:time1}
\end{subfigure}
\hfill
\begin{subfigure}{0.49\columnwidth}
\centering
\includegraphics[width=\textwidth]{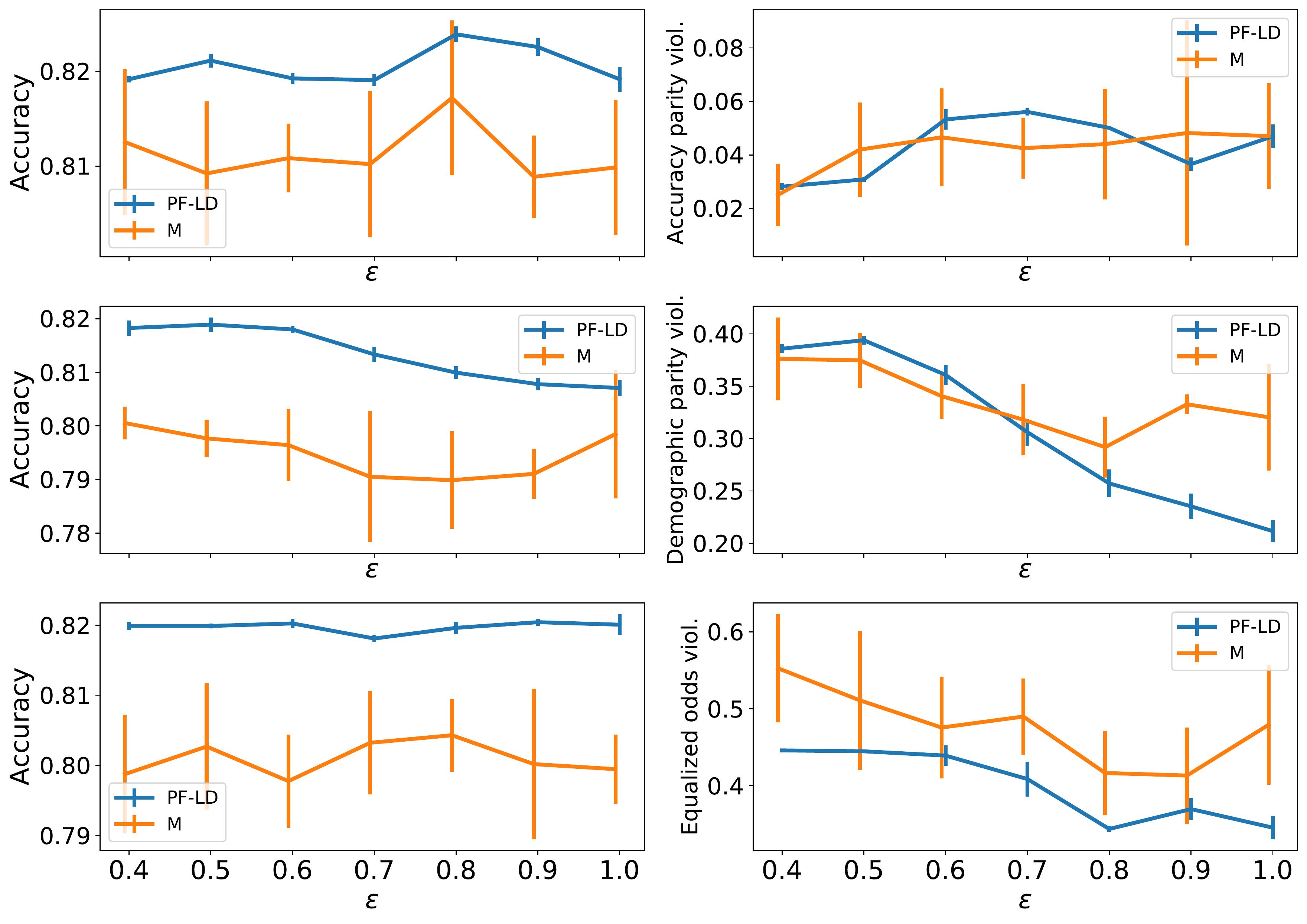}
\caption{$\mbox{M-Bank}^3$}
\label{fig:time2}
\end{subfigure}
\begin{subfigure}{0.49\columnwidth}
\centering
\includegraphics[width=\textwidth]{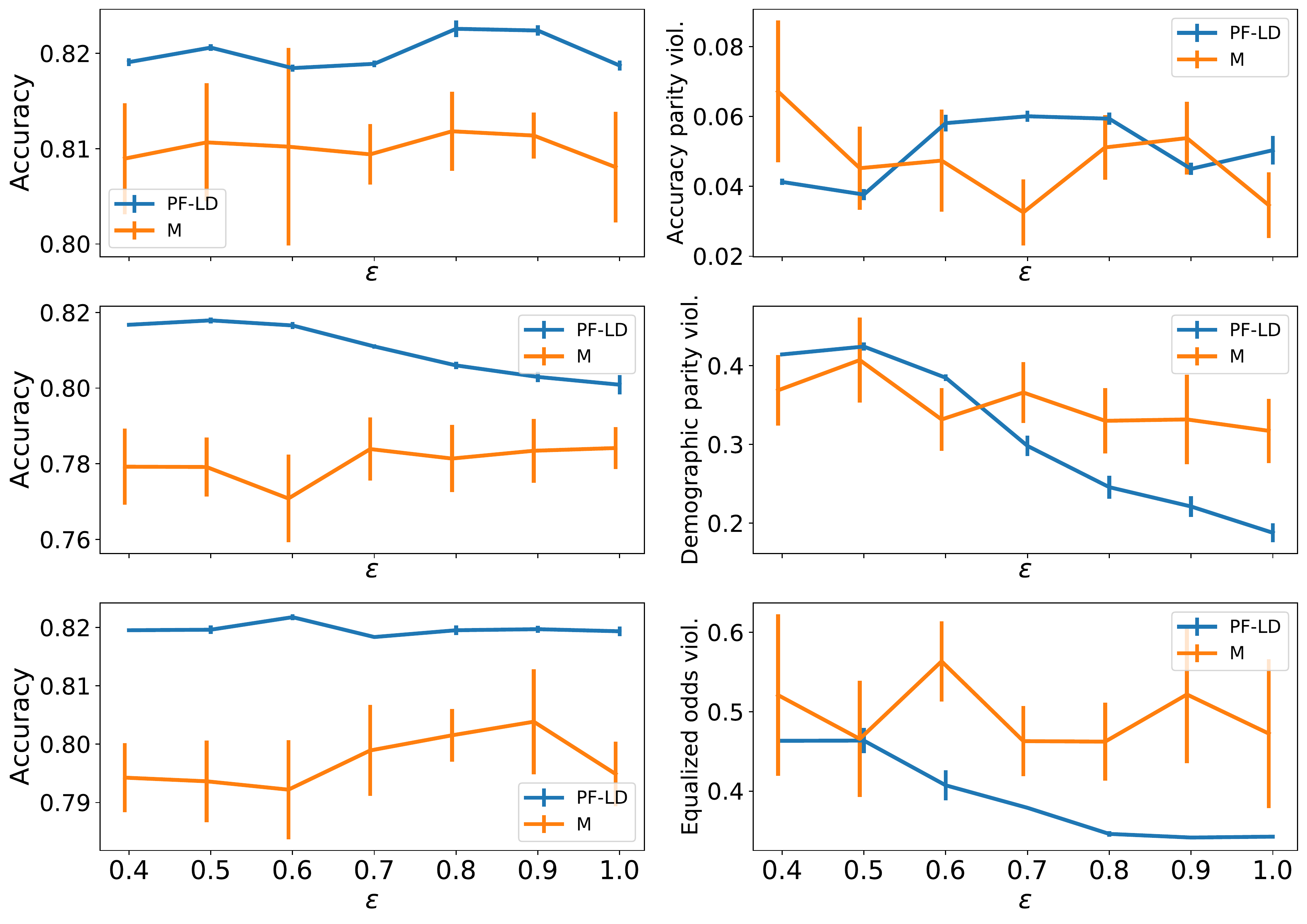}
\caption{$\mbox{M-Bank}^5$}
\label{fig:time3}
\end{subfigure}
\hfill
\begin{subfigure}{0.49\columnwidth}
\centering
\includegraphics[width=\textwidth]{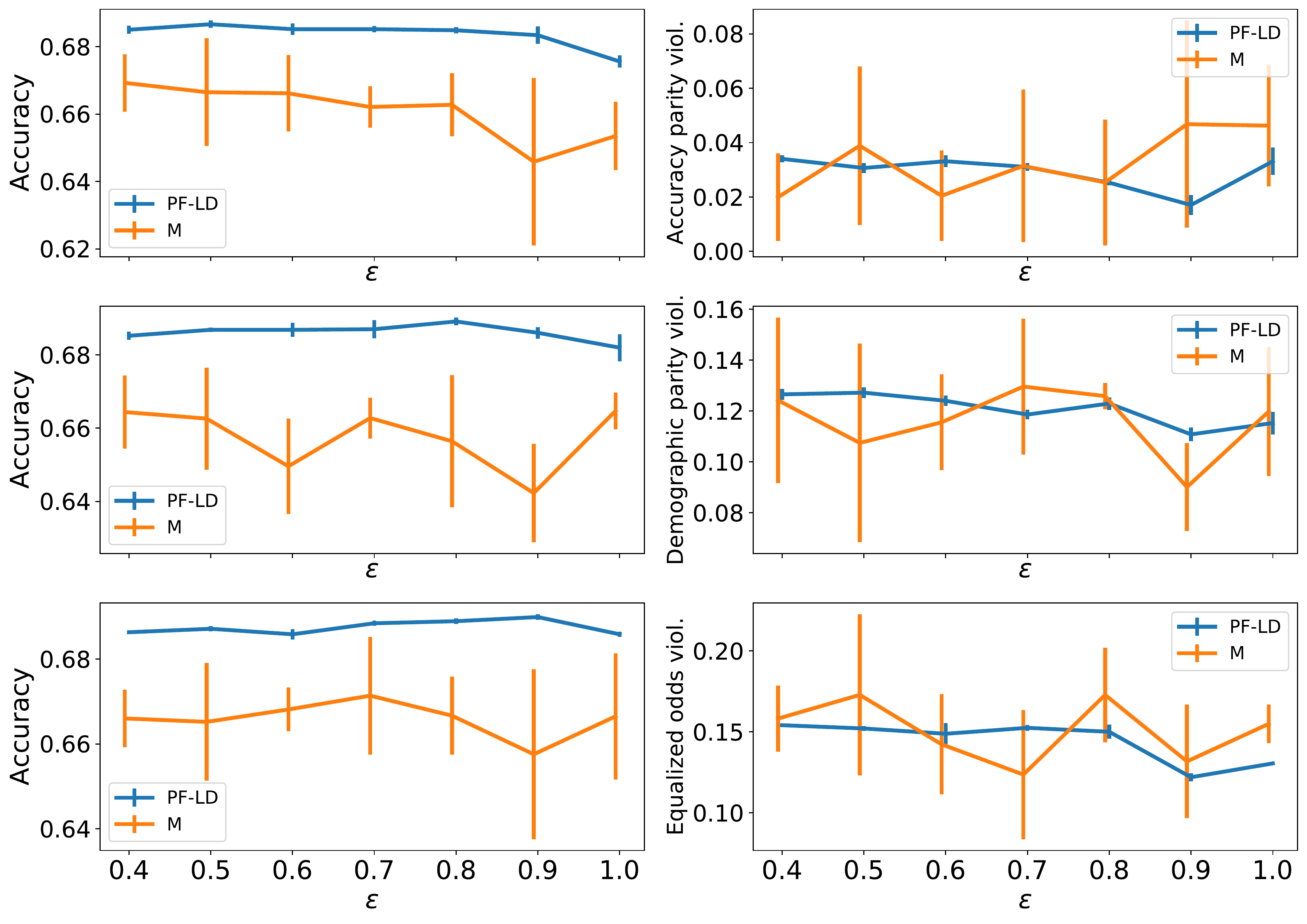}
\caption{Compas}
\label{fig:time4}
\end{subfigure}
\centering
\begin{subfigure}{0.49\columnwidth}
\centering
\includegraphics[width=\textwidth]{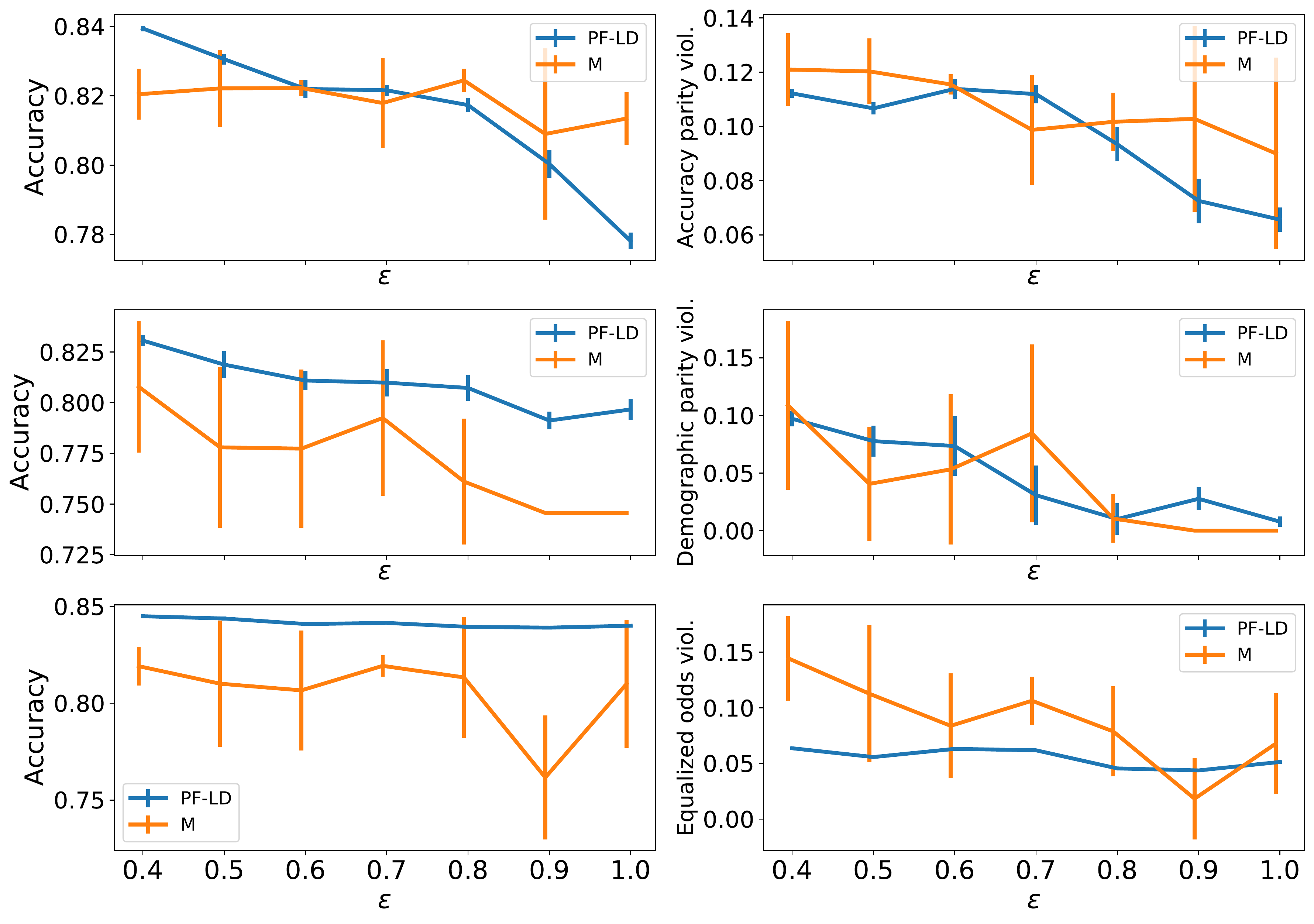}
\caption{Income}
\label{fig:time5}
\end{subfigure}
\caption{Privacy, fairness, and accuracy tradeoff on  different datasets}
\label{fig:tradeoffs}
\end{figure}

\subsection{Privacy, Fairness, and Accuracy Tradeoff}
This section illustrates the tradeoff between privacy, fairness, and accuracy attained by PF-LD and compares them with algorithm \algM.
The results are summarized in Figure \ref{fig:tradeoffs}, that depicts the average and standard deviation of 10 model runs.

Firstly, observe that the fairness violation score decreases as the privacy budget $\epsilon$ increases (note that the scale differs across the plots). 
Large privacy losses allow  
PF-LD to either run more iterations, given fixed noise values used at each iteration, or reduce the level of noise applied to a given number of iterations. 
These cases imply propagating more (former case) or more accurate (latter case) noisy constraint violations that results in better capturing the fairness constraints violations during the primal and dual update steps. This aspect is not obvious for \algM.

Next, notice that the model accuracy slightly decreases as $\epsilon$ increases. While this may seems surprising, our analysis shows that the fairness constraints, having their violations being propagated more exactly when $\epsilon$ increase, have a negative impact on the model accuracy. 

Finally, notice that, in most cases, PF-LD is more accurate and produce models that have smaller fairness violations, and, importantly, it produces models that are more robust than those produced by \algM. This is noticeable by comparing the standard deviations on accuracy and fairness violations of the two models. 

These observations demonstrates the practical benefits of the proposed model.

\subsection{PF-LD: Analysis of the Primal Clipping Bound Value}
This sections analyses the impact of primal clipping bound $C_p$ to the privacy, accuracy, and fairness tradeoff. Figure~\ref{fig:c_p} illustrates the effects of $C_p$ on the model accuracy and fairness, at varying of the privacy parameter $\epsilon$.

Observe that, for different fairness definitions, the best accuracy/fairness tradeoff is obtained when $C_p \in [10, 20]$ (green and yellow curves). 
The use of small clipping values (blue curve) slows the drop in fairness violations at the increasing of the privacy budget $\epsilon$. This is because small $C_p$ values limit the impact of the constraints violations to the model. 
On the other extreme, for high $C_p$ values (e.g., brown curve), not only it is observed a degradation in fairness violations, but also in the model accuracy. This is because large $C_p$ values imply larger amount of noise to be added to the gradient of the constraint violations, resulting in less accurate information to be back-propagated at each step. 
Additionally, the noisy constraints gradients can negatively impact the classifier loss function. Thus, the resulting models tend to have worse accuracy/fairness tradeoff than those trained with intermediate $C_p$ values.

These observations support the theoretical analysis which showed that the expected error between the true and private gradients of the fairness constraints is upper bounded by a convex function of the primal and the dual clipping bounds. 

\begin{figure}[!tb]
\centering
\begin{subfigure}{0.49\columnwidth}
\centering
\includegraphics[width=\textwidth]{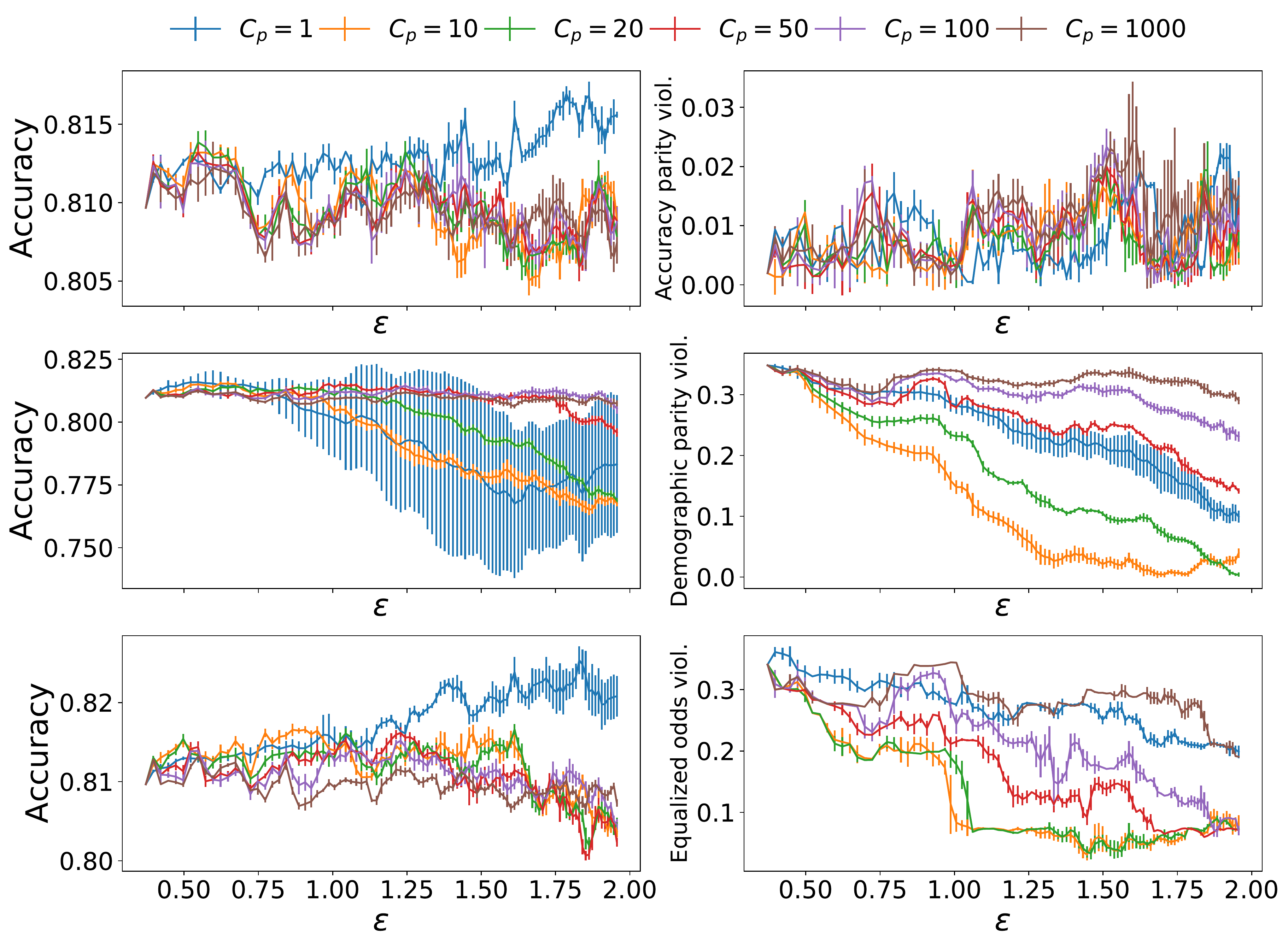}
\caption{Bank}
\label{fig:time2_1}
\end{subfigure}
\hfill
\begin{subfigure}{0.49\columnwidth}
\centering
\includegraphics[width=\textwidth]{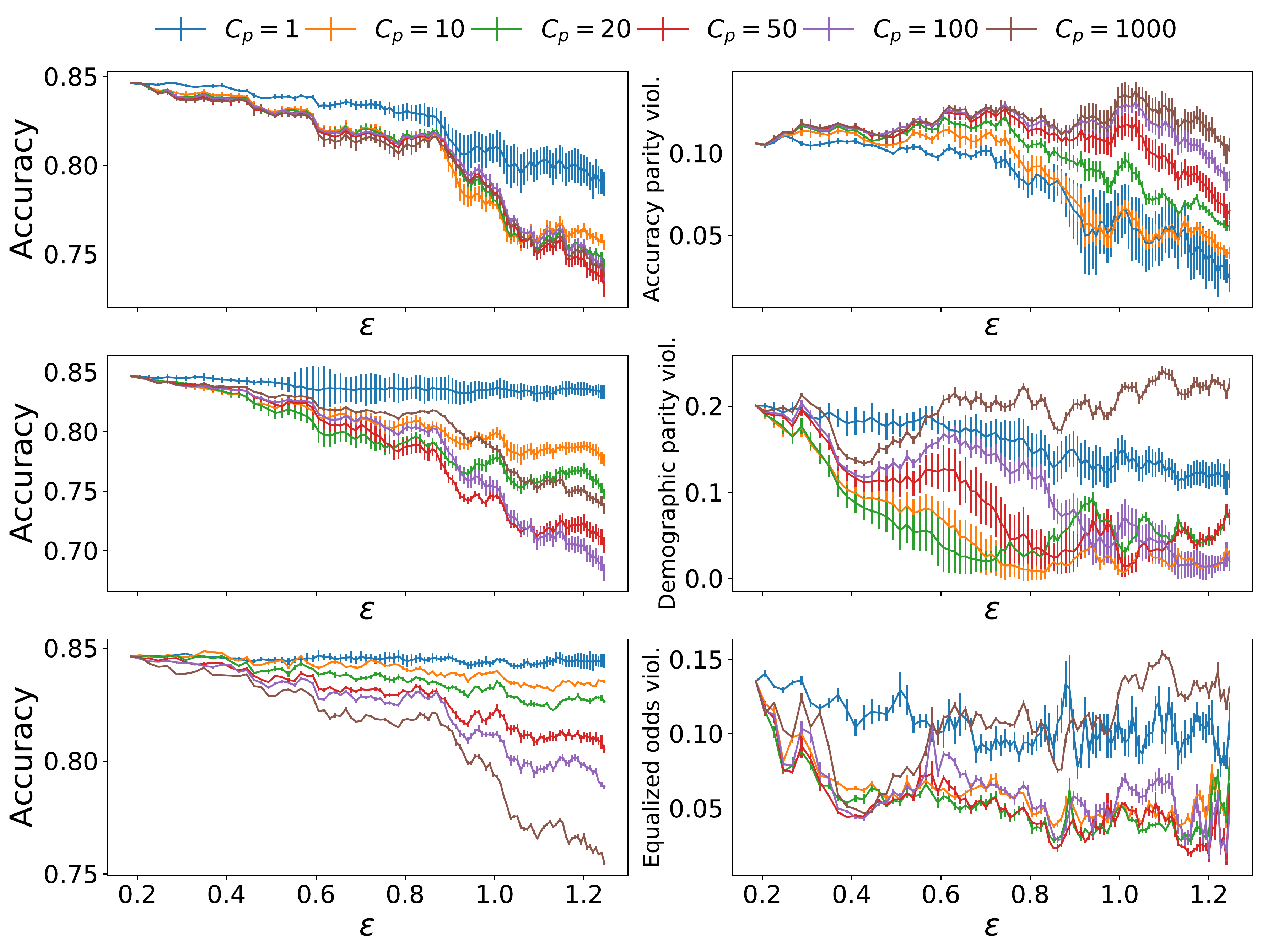}
\caption{Income}
\label{fig:time2_2}
\end{subfigure}
\caption{Effects of $C_p$ to fairness and accuracy on the Bank (left) and 
Income(right) datasets}
\label{fig:c_p}
\end{figure}

To further shed lights on the impacts of $C_p$ to the model fairness and accuracy, Figure \ref{fig:clipping_c_p} illustrates the model accuracy (left column of each sub-figure) the fairness violations (middle column of each sub-figure) and the percentage of times the norm of the gradients associated to the constraint violations of a protected group exceeds the clipping value $C_p$: $\|\tilde{G} \| > C_p \%$ (right column of each sub-figure). 
The last column on each sub-figure indicates the frequency of propagating the correct or the clipped information. 
The figure uses demographic parity, but the results are consistent across the other fairness metrics studied. 
Observe that, the percentage of individual constraint gradients exceeding $C_p$ is very high when $C_p$ is small. 
Thus, a significant amount of information is lost due to clipping. On the other extreme, at large $C_p $ regimes most individual gradients (for both protected groups) are smaller than $C_p$. This choice reduces bias, but it introduces large variances due to noise necessary to preserve privacy. Therefore, both cases result in models that have large fairness violations. 
Conversely, at intermediate $C_p$ regimes, the produced models have lower constraint violations while retaining high accuracy.
 

\begin{figure}[!tb]
\centering
\begin{subfigure}{0.70\columnwidth}
\centering
\includegraphics[width=\textwidth]{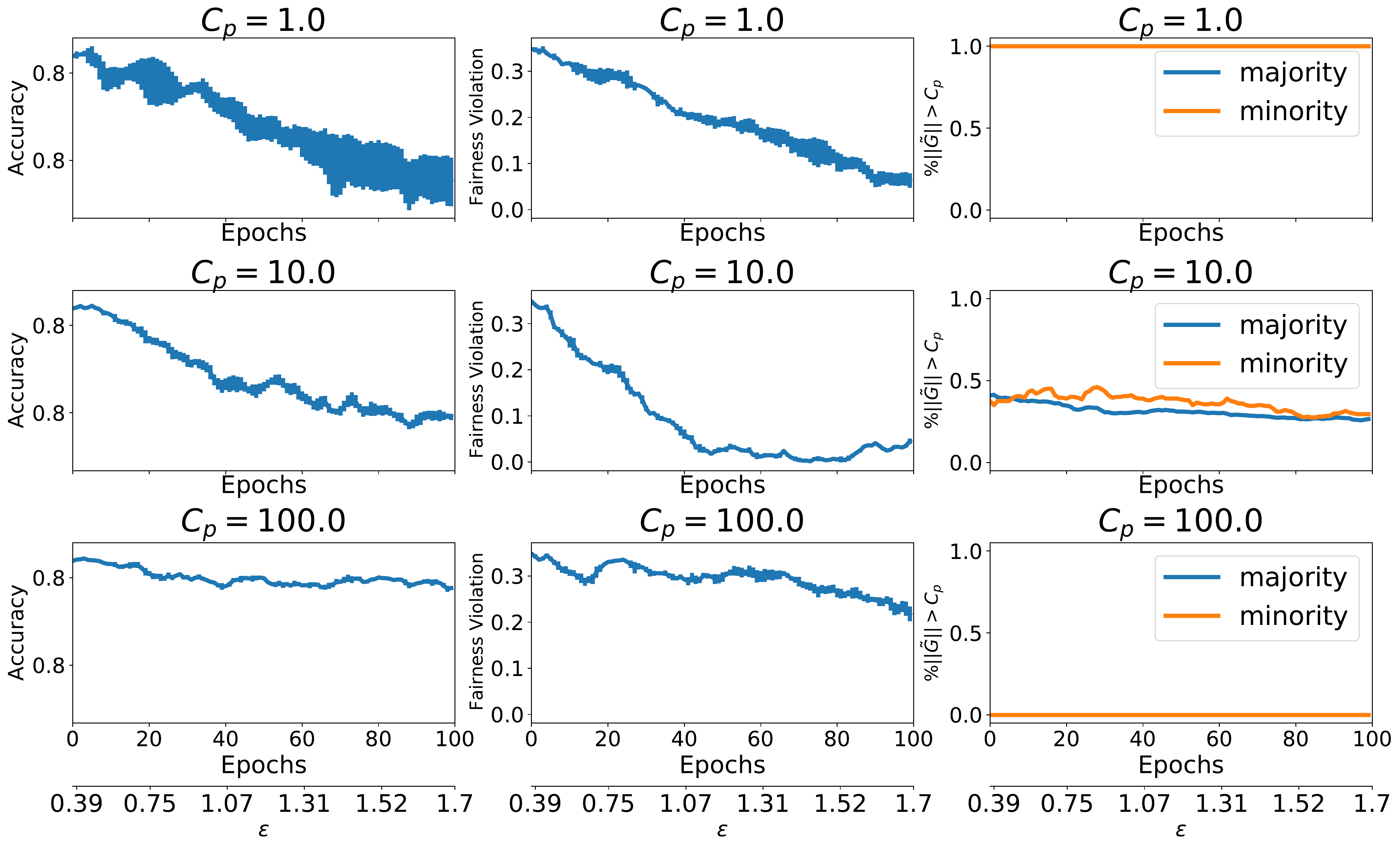}
\caption{Bank}
\label{fig:time3_1}
\end{subfigure}
\hfill
\begin{subfigure}{0.70\columnwidth}
\centering
\includegraphics[width=\textwidth]{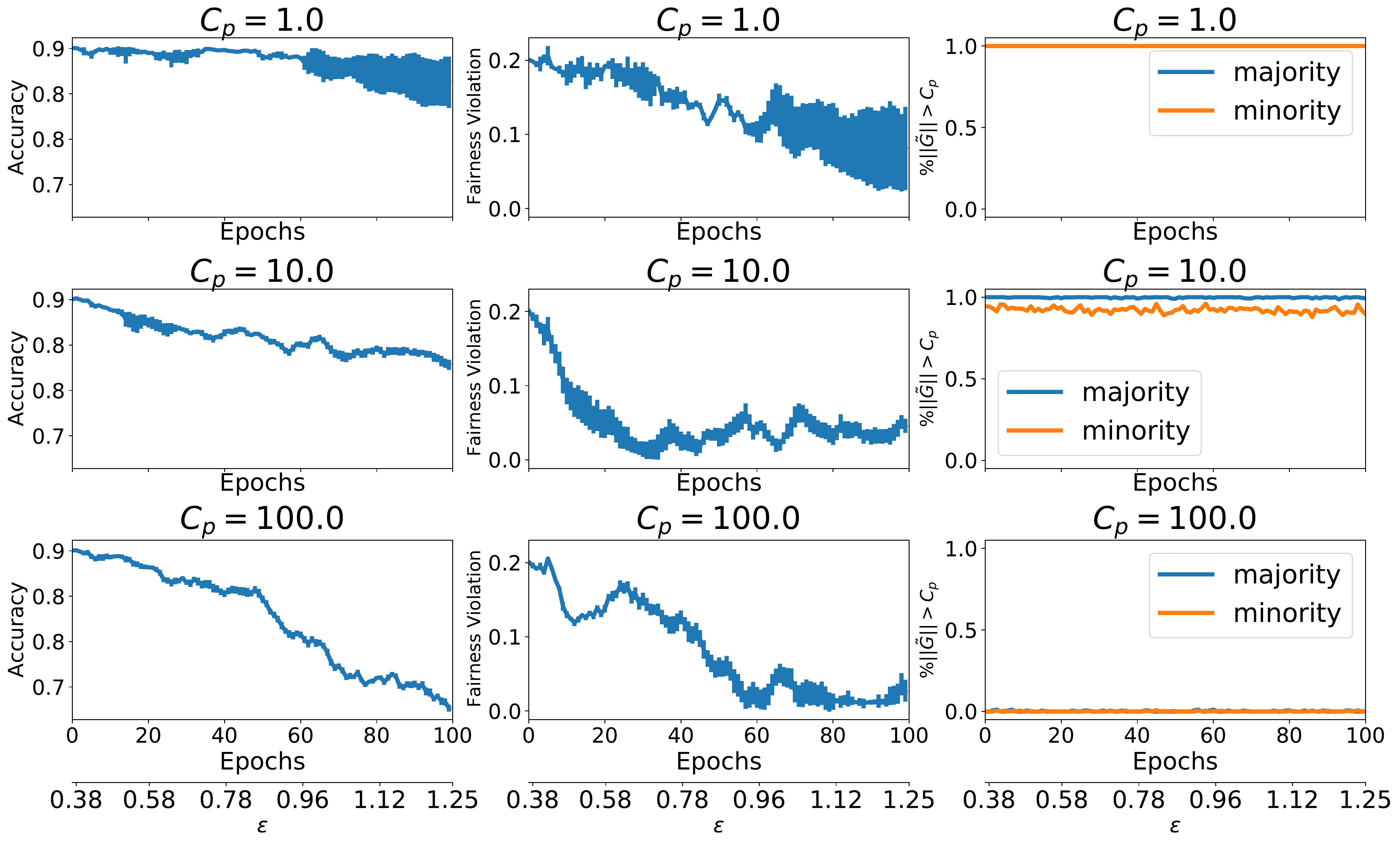}
\caption{Income}
\label{fig:time3_2}
\end{subfigure}
\caption{Individual gradient norms associated to a protected group and their
 relation to the clipping values $C_p$. 
 The figure illustrates the results associated with a model minimizing 
 demographic parity constraint violations on the Bank (left) 
 and Income (right) datasets.}
\label{fig:clipping_c_p}
\end{figure}

\subsection{PF-LD: Analysis of the Dual Clipping Bound Value}

This sections analyses the impact of dual clipping value $C_d$ to the privacy, accuracy, and fairness tradeoff. Figure \ref{fig:bank_CD} shows one example of  the impact of  $C_d$ to the model accuracy and fairness violations on Bank dataset.  
The results show a similar trend to what depicted for the for primal clipping  bound: In dual update using larger values $C_d$ can introduce higher variance.  In contrast, small values can limit the amount of constraint violation being  back propagated to update the multipliers $\bm{\lambda}$. Hence, the model bias  may increase. A clear example of this case is illustrated in Figure \ref{fig:bank_CD}.  It showed that using large values of $C_d$ (e.g $C_d =100.0$ (green curve) or 
$C_d = 1000.0$ (red curve)) introduce large fluctuation to the model accuracy and  fairness violation scores. In addition, large clipping bounds $C_d$ can deteriorate  the model's accuracy. Nevertheless, small clipping bounds $C_d$ might not reduce 
fairness violation at high privacy loss regimes. 
Therefore, intermediate clipping values suggest better accuracy/fairness tradeoffs and, at the same time, do not introduce much variance to the model performance.

\begin{figure}[h]
\centering
\includegraphics[width=0.7\linewidth]{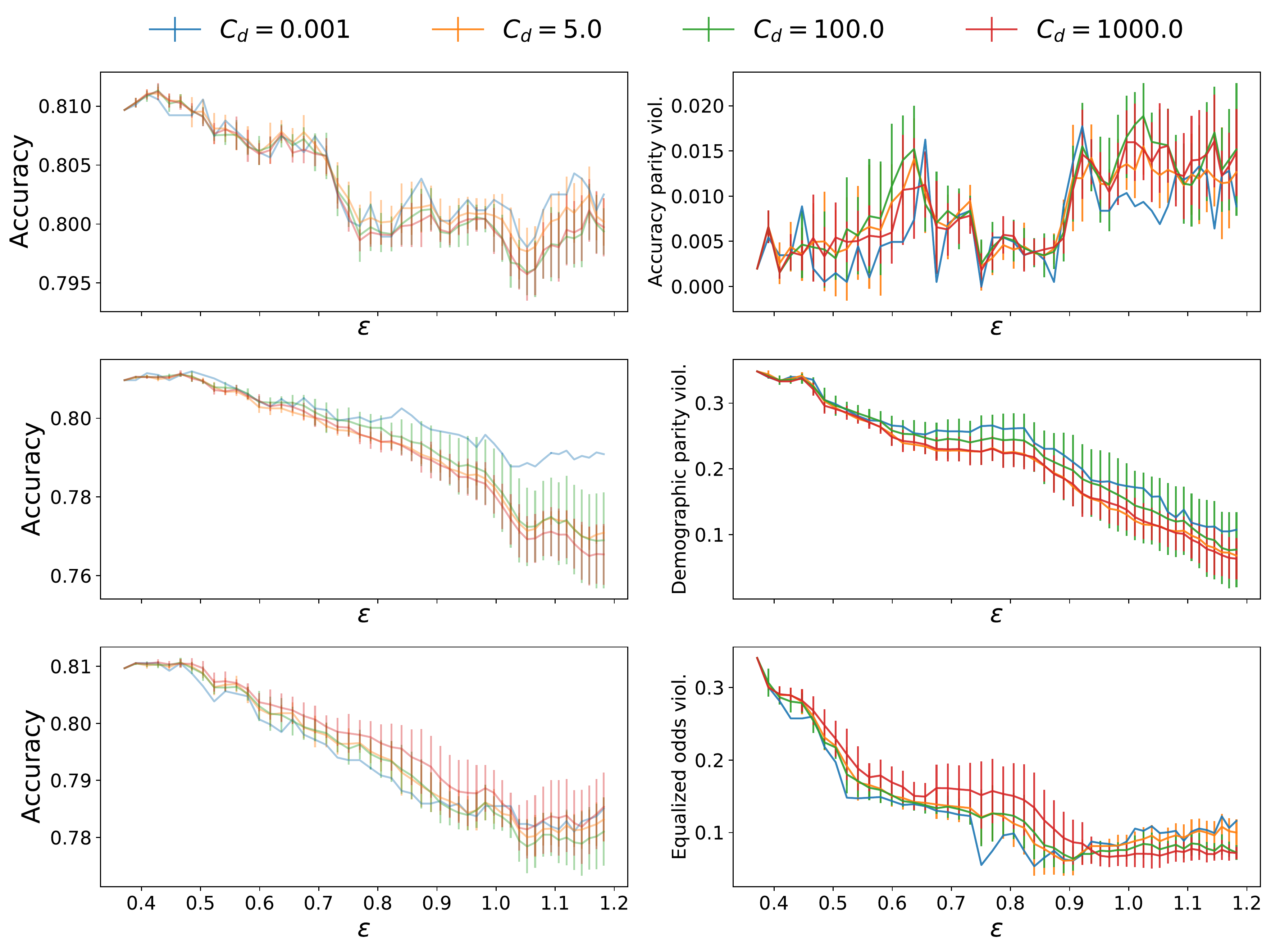}
\caption{Effects of $C_d$ to fairness and accuracy on the Bank dataset.}
\centering
\label{fig:bank_CD}
\end{figure}

\subsection{Missing Values}
The last experiments present results for the PF-LD model extension that handles the missing sensitive information. The model is tested for cases when 60\%, 40\%, 20\%, and no entry in the training data misses the sensitive information. Missing values are not considered during the model evaluation process to assess the model performance. 
Figure \ref{fig:missing_bank_DP} depict the tradeoff among accuracy, fairness, and privacy on Bank data using demographic parity as a fairness metric. It can be noted that the model achieve smaller fairness violations as the number of missing values decreases. Similarly, the model is better able to trade accuracy for fairness violations as the number of missing values decreases.

\begin{figure}[htb]
\centering
\includegraphics[width=0.7\linewidth]{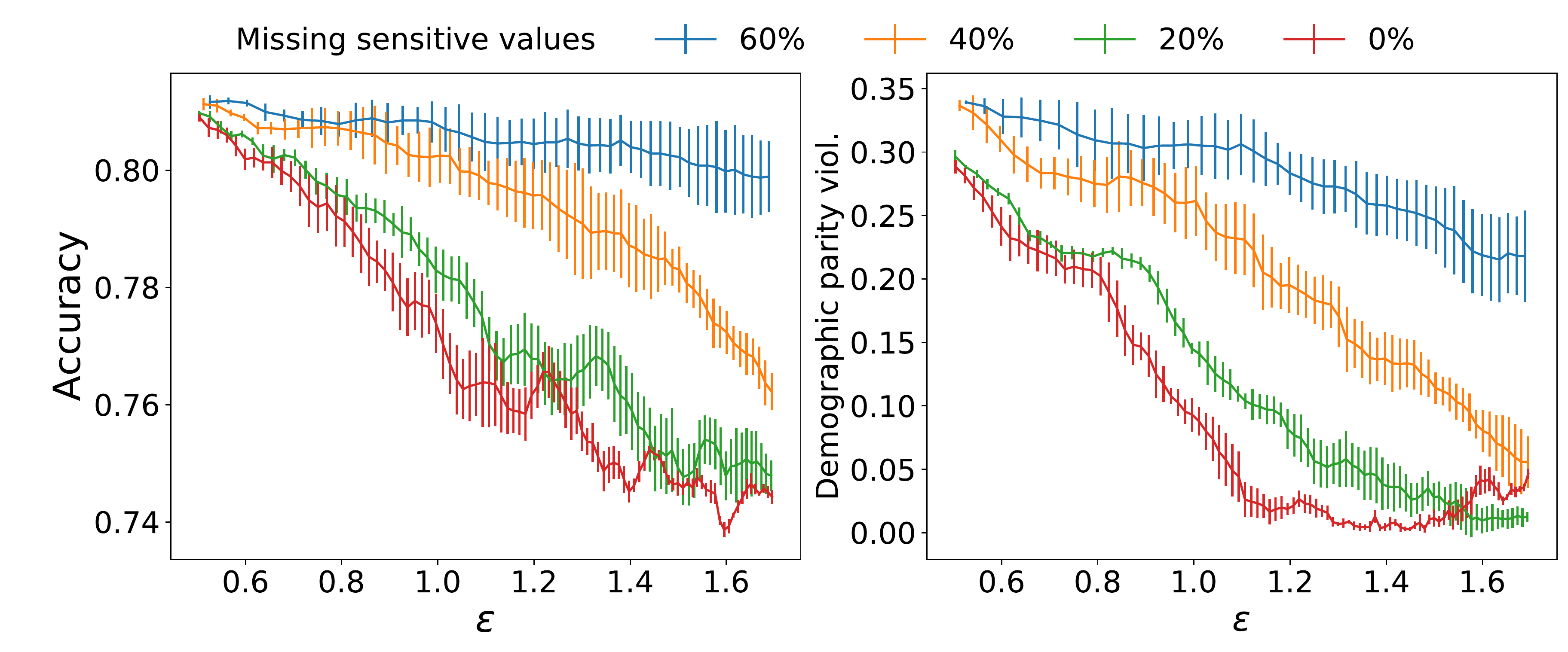}
\caption{PF-LD model with missing sensitive information on Bank data}
\label{fig:missing_bank_DP}
\end{figure}

\section{Conclusions}
This paper was motivated by the discrepancy between concerns in building models whose outcomes do not discriminate against some demographic groups and the requirements that the sensitive attributes, which are essential to build these models, may not be available due to legal and ethical requirements. 
It proposed a framework to train deep learning models that satisfy several notions of group fairness, including equalized odds, accuracy parity, and demographic parity, while ensuring that the model satisfies differential privacy for the protected attributes.
The framework relies on the use of Lagrangian duality to accommodate the fairness constraints and the paper showed how to inject carefully calibrated noise to the primal and dual steps of the Lagrangian dual process to guarantee privacy of the sensitive attributes. 
The paper further analyses the tension between accuracy, privacy, and fairness and an extensive experimental evaluation illustrates the benefits of the proposed framework showing that it may be come a practical tool for privacy-preserving and fair decision making.

\bibliographystyle{plain}
\bibliography{lib}

\begin{thebibliography}{10}

\bibitem{abadi:16}
Martin Abadi, Andy Chu, Ian Goodfellow, H.~Brendan McMahan, Ilya Mironov, Kunal
  Talwar, and Li~Zhang.
\newblock Deep learning with differential privacy.
\newblock In {\em Proceedings of the 2016 ACM SIGSAC Conference on Computer and
  Communications Security}, 2016.

\bibitem{pmlr-v80-agarwal18a}
Alekh Agarwal, Alina Beygelzimer, Miroslav Dudik, John Langford, and Hanna
  Wallach.
\newblock A reductions approach to fair classification.
\newblock In {\em ICML}, 2018.

\bibitem{agarwal:18}
Alekh Agarwal, Alina Beygelzimer, Miroslav Dudik, John Langford, and Hanna
  Wallach.
\newblock A reductions approach to fair classification.
\newblock In {\em ICML}, pages 60--69, 2018.

\bibitem{bagdasaryan:19}
Eugene Bagdasaryan, Omid Poursaeed, and Vitaly Shmatikov.
\newblock Differential privacy has disparate impact on model accuracy.
\newblock In {\em Advances in Neural Information Processing Systems}, pages
  15479--15488, 2019.

\bibitem{berthelot2019mixmatch}
David Berthelot, Nicholas Carlini, Ian Goodfellow, Nicolas Papernot, Avital
  Oliver, and Colin~A Raffel.
\newblock Mixmatch: A holistic approach to semi-supervised learning.
\newblock In {\em NIPS}, pages 5049--5059, 2019.

\bibitem{beutel2017data}
Alex Beutel, Jilin Chen, Zhe Zhao, and Ed~H. Chi.
\newblock Data decisions and theoretical implications when adversarially
  learning fair representations, 2017.

\bibitem{boyd2011distributed}
Stephen Boyd, Neal Parikh, Eric Chu, Borja Peleato, Jonathan Eckstein, et~al.
\newblock Distributed optimization and statistical learning via the alternating
  direction method of multipliers.
\newblock {\em Foundations and Trends{\textregistered} in Machine learning},
  3(1):1--122, 2011.

\bibitem{DBLP:conf/icdm/CaldersKKAZ13}
Toon Calders, Asim Karim, Faisal Kamiran, Wasif Ali, and Xiangliang Zhang.
\newblock Controlling attribute effect in linear regression.
\newblock In {\em ICDM}, 2013.

\bibitem{alex2018frontiers}
Alexandra Chouldechova and Aaron Roth.
\newblock The frontiers of fairness in machine learning, 2018.

\bibitem{cummings:19}
Rachel Cummings, Varun Gupta, Dhamma Kimpara, and Jamie Morgenstern.
\newblock On the compatibility of privacy and fairness.
\newblock In {\em Adjunct Publication of the 27th Conference on User Modeling,
  Adaptation and Personalization}, pages 309--315, 2019.

\bibitem{10.5555/3327144.3327203}
Michele Donini, Luca Oneto, Shai Ben-David, John Shawe-Taylor, and Massimiliano
  Pontil.
\newblock Empirical risk minimization under fairness constraints.
\newblock In {\em NIPS}, 2018.

\bibitem{dwork:12}
Cynthia Dwork, Moritz Hardt, Toniann Pitassi, Omer Reingold, and Richard Zemel.
\newblock Fairness through awareness.
\newblock In {\em Proceedings of the 3rd innovations in theoretical computer
  science conference}, pages 214--226, 2012.

\bibitem{dwork:06}
Cynthia Dwork, Frank McSherry, Kobbi Nissim, and Adam Smith.
\newblock Calibrating noise to sensitivity in private data analysis.
\newblock In {\em Theory of cryptography conference}, pages 265--284. Springer,
  2006.

\bibitem{dwork:14}
Cynthia Dwork, Aaron Roth, et~al.
\newblock The algorithmic foundations of differential privacy.
\newblock {\em Foundations and Trends in Theoretical Computer Science},
  9(3--4):211--407, 2014.

\bibitem{edwards2015censoring}
Harrison Edwards and Amos Storkey.
\newblock Censoring representations with an adversary, 2015.

\bibitem{ekstrand:18}
Michael~D Ekstrand, Rezvan Joshaghani, and Hoda Mehrpouyan.
\newblock Privacy for all: Ensuring fair and equitable privacy protections.
\newblock In {\em Conference on Fairness, Accountability and Transparency},
  pages 35--47, 2018.

\bibitem{Feldman2015ComputationalFP}
Michael Feldman.
\newblock Computational fairness: Preventing machine-learned discrimination.
\newblock 2015.

\bibitem{fioretto:20b}
Ferdinando Fioretto, Terrence W.~K. Mak, Federico Baldo, Michele Lombardi, and
  Pascal~Van Hentenryck.
\newblock A lagrangian dual framework for deep neural networks with
  constraints.
\newblock {\em CoRR}, abs/2001.09394, 2020.

\bibitem{DBLP:conf/aaai/FiorettoMH20}
Ferdinando Fioretto, Terrence W.~K. Mak, and Pascal~Van Hentenryck.
\newblock Predicting {AC} optimal power flows: Combining deep learning and
  lagrangian dual methods.
\newblock In {\em The Thirty-Fourth {AAAI} Conference on Artificial
  Intelligence {(AAAI)}}, pages 630--637, 2020.

\bibitem{Fontaine:14}
Daniel Fontaine, Michel Laurent, and Pascal {Van Hentenryck}.
\newblock Constraint-based lagrangian relaxation.
\newblock In {\em Principles and Practice of Constraint Programming}, pages
  324--339, 2014.

\bibitem{NIPS2016_6374}
Moritz Hardt, Eric Price, Eric Price, and Nati Srebro.
\newblock Equality of opportunity in supervised learning.
\newblock In {\em NIPS}. 2016.

\bibitem{hardt:16}
Moritz Hardt, Eric Price, and Nati Srebro.
\newblock Equality of opportunity in supervised learning.
\newblock In {\em NIPS}, 2016.

\bibitem{Hestenes:69}
Magnus~R Hestenes.
\newblock Multiplier and gradient methods.
\newblock {\em Journal of optimization theory and applications}, 4(5):303--320,
  1969.

\bibitem{Jagielski:20}
Matthew Jagielski, Michael Kearns, Jieming Mao, Alina Oprea, Aaron Roth,
  Saeed~Sharifi Malvajerdi, and Jonathan Ullman.
\newblock Differentially private fair learning.
\newblock In {\em Proceedings of the 36th International Conference on Machine
  Learning}, 2019.

\bibitem{NIPS2014_5392}
Peter Kairouz, Sewoong Oh, and Pramod Viswanath.
\newblock Extremal mechanisms for local differential privacy.
\newblock In {\em Advances in Neural Information Processing Systems 27}. 2014.

\bibitem{pmlr-v80-kilbertus18a}
Niki Kilbertus, Adria Gascon, Matt Kusner, Michael Veale, Krishna Gummadi, and
  Adrian Weller.
\newblock Blind justice: Fairness with encrypted sensitive attributes.
\newblock Proceedings of Machine Learning Research, 2018.

\bibitem{10.1145/3306618.3314287}
Michael~P. Kim, Amirata Ghorbani, and James Zou.
\newblock Multiaccuracy: Black-box post-processing for fairness in
  classification.
\newblock In {\em Proceedings of the 2019 AAAI/ACM Conference on AI, Ethics,
  and Society}, 2019.

\bibitem{DBLP:journals/corr/abs-1812-06135}
Pranay~Kr. Lohia, Karthikeyan~Natesan Ramamurthy, Manish Bhide, Diptikalyan
  Saha, Kush~R. Varshney, and Ruchir Puri.
\newblock Bias mitigation post-processing for individual and group fairness.
\newblock {\em CoRR}, abs/1812.06135, 2018.

\bibitem{mcmahan:18}
H.~Brendan McMahan, Galen Andrew, Ulfar Erlingsson, Steve Chien, Ilya Mironov,
  Nicolas Papernot, and Peter Kairouz.
\newblock A general approach to adding differential privacy to iterative
  training procedures, 2018.

\bibitem{mironov2017renyi}
Ilya Mironov.
\newblock R{\'e}nyi differential privacy.
\newblock In {\em 2017 IEEE 30th Computer Security Foundations Symposium
  (CSF)}, pages 263--275. IEEE, 2017.

\bibitem{Mironov_2017}
Ilya Mironov.
\newblock Rényi differential privacy.
\newblock {\em 2017 IEEE 30th Computer Security Foundations Symposium (CSF)},
  Aug 2017.

\bibitem{mironov2019rnyi}
Ilya Mironov, Kunal Talwar, and Li~Zhang.
\newblock Rényi differential privacy of the sampled gaussian mechanism, 2019.

\bibitem{mozannar2020fair}
Hussein Mozannar, Mesrob~I. Ohannessian, and Nathan Srebro.
\newblock Fair learning with private demographic data.
\newblock In {\em ICML}, 2020.

\bibitem{10.1145/3306618.3314255}
Luca Oneto, Michele Doninini, Amon Elders, and Massimiliano Pontil.
\newblock Taking advantage of multitask learning for fair classification.
\newblock In {\em Proceedings of the 2019 AAAI/ACM Conference on AI, Ethics,
  and Society}, 2019.

\bibitem{papernot:16}
Nicolas Papernot, Martín Abadi, Úlfar Erlingsson, Ian Goodfellow, and Kunal
  Talwar.
\newblock Semi-supervised knowledge transfer for deep learning from private
  training data, 2016.

\bibitem{pichapati:19}
Venkatadheeraj Pichapati, Ananda~Theertha Suresh, Felix~X. Yu, Sashank~J.
  Reddi, and Sanjiv Kumar.
\newblock Adaclip: Adaptive clipping for private sgd, 2019.

\bibitem{pujol:20}
David Pujol, Ryan McKenna, Satya Kuppam, Michael Hay, Ashwin Machanavajjhala,
  and Gerome Miklau.
\newblock Fair decision making using privacy-protected data.
\newblock In {\em Proceedings of the 2020 Conference on Fairness,
  Accountability, and Transparency}, pages 189--199, 2020.

\bibitem{NIPS2017_6670}
Novi Quadrianto and Viktoriia Sharmanska.
\newblock Recycling privileged learning and distribution matching for fairness.
\newblock In {\em NIPS}. 2017.

\bibitem{woodworth2017learning}
Blake Woodworth, Suriya Gunasekar, Mesrob~I. Ohannessian, and Nathan Srebro.
\newblock Learning non-discriminatory predictors, 2017.

\bibitem{zafar:17}
Muhammad~Bilal Zafar, Isabel Valera, Manuel Gomez~Rodriguez, and Krishna~P
  Gummadi.
\newblock Fairness beyond disparate treatment \& disparate impact: Learning
  classification without disparate mistreatment.
\newblock In {\em Proceedings of the 26th international conference on world
  wide web}, pages 1171--1180, 2017.

\bibitem{zafar2017fairnes}
Muhammad~Bilal Zafar, Isabel Valera, Manuel Gomez~Rodriguez, and Krishna~P
  Gummadi.
\newblock Fairness beyond disparate treatment \& disparate impact: Learning
  classification without disparate mistreatment.
\newblock In {\em Proceedings of the 26th international conference on world
  wide web}, pages 1171--1180, 2017.

\bibitem{zafar2015fairness}
Muhammad~Bilal Zafar, Isabel Valera, Manuel~Gomez Rodriguez, and Krishna~P.
  Gummadi.
\newblock Fairness constraints: Mechanisms for fair classification, 2015.

\bibitem{zhao2019conditional}
Han Zhao, Amanda Coston, Tameem Adel, and Geoffrey~J. Gordon.
\newblock Conditional learning of fair representations.
\newblock In {\em ICLR}. 2019.

\end{thebibliography}

\end{document}